\newcites{latex}{References for the Supplementary Materials}
\newtheorem{theorem}{Theorem}
\newtheorem{corollary}{Corollary}[theorem]
\let\oldcitelatex\citelatex
\renewcommand{\citelatex}[1]{\textcolor{blue}{\oldcitelatex{#1}}}
\def\eqref#1{equation~\ref{#1}}
\def\1{\bm{1}}
\def\vtheta{{\bm{\theta}}}
\def\vb{{\bm{b}}}
\def\vc{{\bm{c}}}
\def\vf{{\bm{f}}}
\def\vg{{\bm{g}}}
\def\vh{{\bm{h}}}
\def\vk{{\bm{k}}}
\def\vq{{\bm{q}}}
\def\vu{{\bm{u}}}
\def\vv{{\bm{v}}}
\def\vx{{\bm{x}}}
\def\vy{{\bm{y}}}
\def\vz{{\bm{z}}}
\def\mA{{\bm{A}}}
\def\mB{{\bm{B}}}
\def\mC{{\bm{C}}}
\def\mG{{\bm{G}}}
\def\mI{{\bm{I}}}
\def\mK{{\bm{K}}}
\def\mQ{{\bm{Q}}}
\def\mV{{\bm{V}}}
\def\mW{{\bm{W}}}
\def\mX{{\bm{X}}}
\def\mY{{\bm{Y}}}
\def\mZ{{\bm{Z}}}
\def\mLambda{{\bm{\Lambda}}}
\DeclareMathAlphabet{\mathsfit}{\encodingdefault}{\sfdefault}{m}{sl}
\SetMathAlphabet{\mathsfit}{bold}{\encodingdefault}{\sfdefault}{bx}{n}
\def\sA{{\mathbb{A}}}
\def\sF{{\mathbb{F}}}
\def\sG{{\mathbb{G}}}
\def\sM{{\mathbb{M}}}
\newcommand{\R}{\mathbb{R}}
\begin{document}
\title{Physics-Informed Neural Koopman Machine for Interpretable Longitudinal Personalized Alzheimer's Disease Forecasting}

\author{Georgi Hrusanov\thanks{G. Hrusanov is with the Platform of Bioinformatics (PBI), University Lausanne Hospital (CHUV) and Faculty of Biology and Medicine (FBM), University of Lausanne (UNIL), Lausanne, 1012, Switzerland. (email: georgi.hrusanov@chuv.ch).}, Duy-Thanh Vu, Duy-Cat Can\thanks{D.T. Vu, D.C. Can, M. Ryan, and J. Bodelet are with PBI, CHUV and FBM, UNIL, Lausanne, 1012, Switzerland. S. Tascedda is with the Department of Biomedical Engineering, University of Basel, 4123 Basel, Switzerland.}, Sophie Tascedda, Margaret Ryan, Julien Bodelet,\\
Katarzyna Koscielska$^\ast$, Carsten Magnus$^\ast$\thanks{K. Koscielska is with Roche Diagnostics International Ltd., Rotkreuz, 6343, Switzerland. C. Magnus is with F. Hoffmann - La Roche Ltd., Roche Information Solutions, Basel, 4058, Switzerland.}, and Oliver Y. Ch\'{e}n$^\ast$\thanks{O.Y. Ch\'{e}n is with the PBI, CHUV and FBM, UNIL, Lausanne, 1012, Switzerland. (email: olivery.chen@chuv.ch).}
\thanks{$^\ast$K. Koscielska, C. Magnus, and O.Y. Ch\'{e}n are joint senior authors.}
\\
for the Alzheimer's Disease Neuroimaging Initiative$^\dagger$\thanks{$^\dagger$Data used in preparation of this article were obtained from the Alzheimer's Disease Neuroimaging Initiative (ADNI) database (adni.loni.usc.edu). As such, the investigators within the ADNI contributed to the design
and implementation of ADNI and/or provided data but did not participate in analysis or writing of this report.
A complete listing of ADNI investigators can be found at:
\protect\url{http://adni.loni.usc.edu/wp-content/uploads/how_to_apply/ADNI_Acknowledgement_List.pdf }.}
        % <-this % stops a space
\thanks{}% <-this % stops a space
\thanks{}}

% The paper headers
%\markboth{Journal of \LaTeX\ Class Files,~Vol.~14, No.~8, August~2021}%
%{Shell \MakeLowercase{\textit{et al.}}: A Sample Article Using IEEEtran.cls for IEEE Journals}

\maketitle

\begin{abstract}
Early forecasting of individual cognitive decline in Alzheimer's disease (AD) is central to disease evaluation and management. Despite advances, it is as of yet challenging for existing methodological frameworks to integrate multimodal data for longitudinal personalized forecasting while maintaining interpretability. To address this gap, we present the Neural Koopman Machine (NKM), a new machine learning architecture inspired by dynamical systems and attention mechanisms, designed to forecast multiple cognitive scores simultaneously using multimodal genetic, neuroimaging, proteomic, and demographic data. NKM integrates analytical ($\alpha$) and biological ($\beta$) knowledge to guide feature grouping and control the hierarchical attention mechanisms to extract relevant patterns. By implementing Fusion Group-Aware Hierarchical Attention within the Koopman operator framework, NKM transforms complex nonlinear trajectories into interpretable linear representations. To demonstrate NKM's efficacy, we applied it to study the Alzheimer's Disease Neuroimaging Initiative (ADNI) dataset. Our results suggest that NKM consistently outperforms both traditional machine learning methods and deep learning models in forecasting trajectories of cognitive decline. Specifically, NKM (1) forecasts changes of multiple cognitive scores simultaneously, (2) quantifies differential biomarker contributions to predicting distinctive cognitive scores, and (3) identifies brain regions most predictive of cognitive deterioration. Together, NKM advances personalized, interpretable forecasting of future cognitive decline in AD using past multimodal data through an explainable, explicit system and reveals potential multimodal biological underpinnings of AD progression.
\end{abstract}

\begin{IEEEkeywords}
Alzheimer's disease, Koopman operator theory, neural networks, dynamical systems, longitudinal prediction, multimodal biomarkers
\end{IEEEkeywords}

\section{Introduction}
Alzheimer's disease (AD) is a growing global crisis. Recent estimates from the Alzheimer's Disease International project that AD will affect up to 139 million cases worldwide by 2050 %, with the majority in low- and middle-income countries 
\cite{ADI_Dementia_Statistics_2023}
%projected to affect 152 million people by 2050 
with an economic burden of nearly \$17 trillion \cite{international_world_2023, nandi_global_2022}. AD progressively degrades patients' cognitive, behavioral, and social faculties, severely diminishing quality of life and inevitably leading to death \cite{kahle-wrobleski_assessing_2017}.

Early forecasting of AD progression is critical to disease management and to developing targeted treatments, but it must confront several challenges. First, because of AD's neural, genetic, proteomic, and demographic associations, AD forecasting is a multimodal problem that requires investigating both inter- and intra-modality relationships. Current machine learning (ML) methods, however, often overlook these relationships \cite{li_deep_2014,wen_convolutional_2020}. Second, AD disrupts multiple cognitive functions. Longitudinal prediction of multivariate outcomes, therefore, is crucial \cite{morris_clinical_1997}. Third, AD progression is complex, with nonlinear and highly heterogeneous trajectories across patients \cite{jack_nia-aa_2018, livingston_dementia_2020, vogel_four_2021, scholl_distinct_2017, diaz_optimus_2025}. Fourth, to establish clinical utility, AD prediction needs to be interpretable, ranking the hierarchy of feature importance and mapping neuroimaging biomarkers back onto the brain space.

Chief to addressing these challenges is solving the problem of \textit{longitudinal many-to-many forecasting}: using past multimodal data to predict future multivariate outcomes, while maintaining biological interpretability. Formally, consider a multimodal feature vector measured at time $t$, $\vx_t$, as $\vx_t = [\vx_t^{(1)}, \vx_t^{(2)}, \vx_t^{(3)}, \vx_t^{(4)}, \vx_t^{(5)}]$,
%\begin{equation}
%  \vx_t = [\vx_t^{(1)}, \vx_t^{(2)}, \vx_t^{(3)}, \vx_t^{(4)}],
%\end{equation}
where the multimodal data include genetic markers ($\vx_t^{(1)}$, APOE4 genotype); proteomic biomarkers ($\vx_t^{(2)}$, including amyloid beta, total tau or t-tau, and phosphorylated tau or p-tau from the cerebrospinal fluid (CSF)); PET imaging biomarkers ($\vx_t^{(3)}$, including FDG, PIB, and AV-45 uptake); and cortical thickness features and intracranial volume (ICV) from the MRI data ($\vx_t^{(4)}$), plus the optional non-biomarker demographic factors such as age, sex, and education ($\vx_t^{(5)}$), yielding a total of 44 features per time step to capture both biological and clinical covariates essential for more in-depth personalized modeling.

Denote multivariate clinical outcomes $\vy_{t+1} \in \R^c$ for $c$ disease outcomes evaluated at time $t+1$. The central aim of the problem is to find a biologically interpretable mapping by leveraging Koopman operator theory to linearize nonlinear disease dynamics, such that:
\begin{equation}
    \hat{\mathbf{y}}_{t+1} = \mathbf{D}\!\left( \mathbf{K} \mathbf{z}_t^{\text{ref}} + \mathbf{c}_t \right) ,
\end{equation}
where $\mathbf{D}$ is the decoder, $\mathbf{K}$ is the learned Koopman matrix, $\vz_t^{\text{ref}}$ is the refined latent representation derived from the temporal window $\{\vx_\tau : \tau \in \{t-w+1,\dots,t\}\}$, and $\vc_t$ is the hierarchical attention-derived control vector.

To address the longitudinal many-to-many forecasting problem, we propose the Neural Koopman Machine (NKM), a new, physics-inspired architecture combining two complementary knowledge types: \textbf{biological knowledge} ($\beta$-knowledge) and \textbf{analytical knowledge} ($\alpha$-knowledge). Specifically, $\beta$-knowledge encodes domain-specific information on multimodal biomarker organization (genetic, CSF, PET, MRI, and demographics) to guide feature partitioning and processing. The $\alpha$-knowledge implements the mathematical framework for Koopman operator linearization, matrix operations, and hierarchical attention mechanisms, which dynamically weigh biomarker relevance across time and modalities. By bridging physics-informed modeling with domain expertise, NKM not only advances predictive accuracy but also lays the foundation for precision medicine in neurodegeneration, enabling early intervention, patient stratification in clinical trials, and potentially the discovery of new therapeutic targets.

NKM's contributions are threefold: methodology, neurobiology, and explainability. From a methodological perspective, NKM leverages Koopman operator theory~\cite{koopman_hamiltonian_1931, brunton_modern_2022} to linearize nonlinear dynamics, improving on previous disease models ~\cite{young_data-driven_2014, oxtoby_data-driven_2017} by integrating multimodal data via structured, interpretable attention mechanisms. Biologically, unlike prior applications of Koopman theory in healthcare~\cite{proctor_generalizing_2018}, NKM directly incorporates biological insights into the analytical framework. Regarding explainability, unlike black-box deep learning approaches~\cite{bhagwat_modeling_2018, lee_predicting_2019}, NKM explicitly quantifies temporal and biomarker contributions to each personalized forecast by leveraging physics-informed techniques.

\begin{figure*}[ht]
    \centering
    \includegraphics[width=0.95\textwidth]{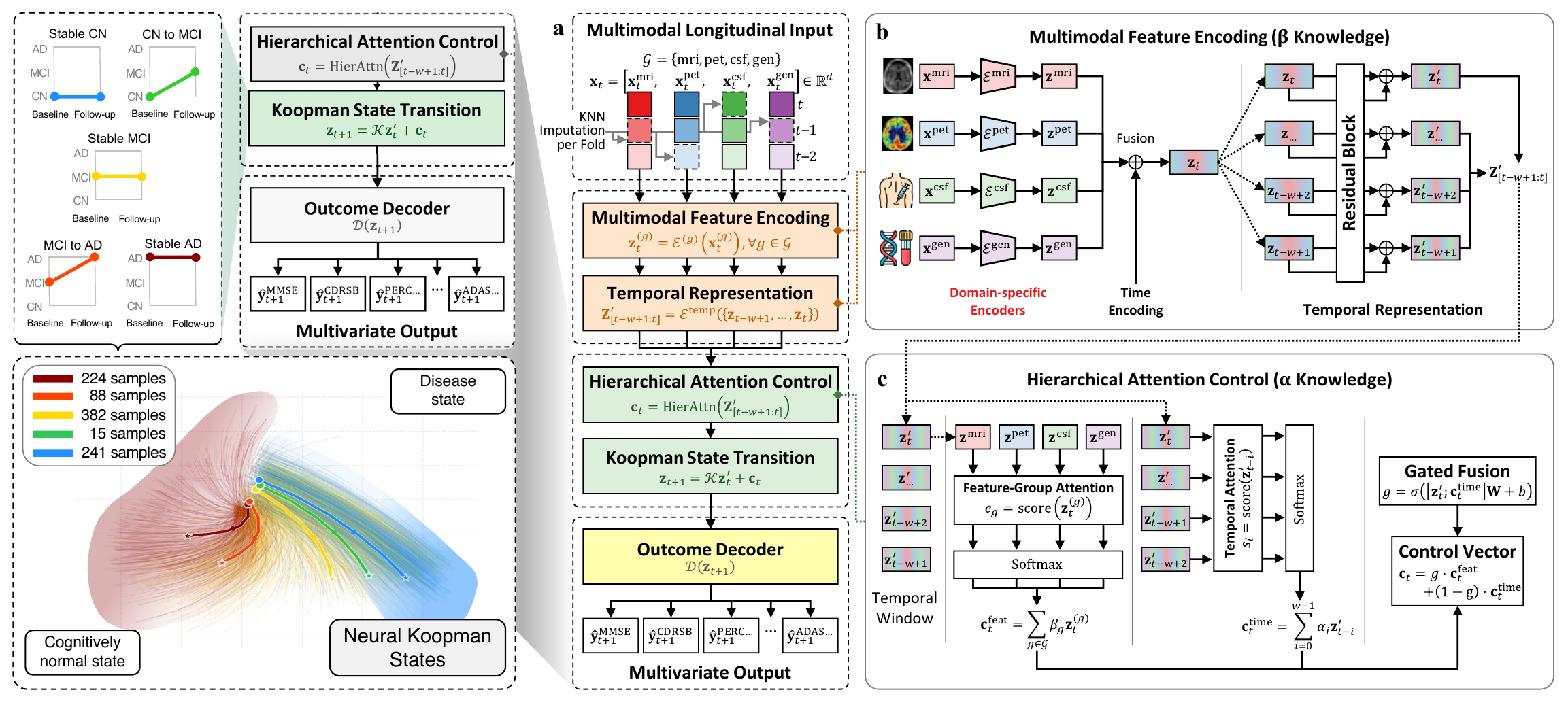}
    \caption{\textbf{The schematic architecture of the Neural Koopman Machine (NKM).} \textbf{Right panel (a-c):} \textcolor{black}{NKM uses biological knowledge ($\beta$-knowledge) to guide modality-specific encoders to process} Magnetic Resonance Imaging (MRI), Positron Emission Tomography (PET), cerebrospinal fluid (CSF), genetic, and demographic data with time embeddings, \textcolor{black}{and to fuse the resulting learned modality-specific latent representations} into a \textcolor{black}{joint} latent state. In parallel, NKM uses analytical knowledge ($\alpha$-knowledge) to steer hierarchical attention to combine feature-group and temporal attention and generate a control vector, which drives a Koopman state transition. The decoder then maps the predicted latent state to multiple clinical outcome trajectories (e.g., MMSE, CDR-SB, and ADAS). \textbf{Left panel (inset):} \textcolor{black}{The learned} Koopman latent-space trajectories highlight distinct dynamical patterns for subjects who are stably cognitively normal (CN $\rightarrow$ CN), with mild cognitive impairment (MCI $\rightarrow$ MCI) and Alzheimer's disease (AD $\rightarrow$ AD), as well as converters (e.g., CN $\rightarrow$ MCI and MCI $\rightarrow$ AD).}

    \label{fig:overall_architecture}
\end{figure*}

\section{Related Work}
\label{sec:related}
\subsection{Dynamical Systems Modeling}
Traditional dynamical systems such as Ordinary Differential Equations (ODEs) and state-space models offer interpretability but often fail to capture the nonlinearities of biomedical data \cite{murray_mathematical_2002}. More flexible approaches, such as Hidden Markov Models (HMMs), Gaussian Processes (GPs), and Kalman filters, struggle with scalability and high-dimensional feature interactions \cite{ghahramani_tutorial_nodate,kalman_new_1960}. Recent developments, such as Neural and Latent ODEs, combine deep learning with differential equations \cite{chen_neural_2019, rubanova_latent_2019}, but computational efficiency and explicit interpretability remain major obstacles to their adoption in clinical practice.

\subsection{Koopman Operator Theory (KOT) in Machine Learning}
KOT enables the linear representation of nonlinear dynamical systems via infinite-dimensional linear operators on state observables \cite{koopman_hamiltonian_1931,mezic_spectral_2005}. Data-driven methods based on KOT, such as Dynamic Mode Decomposition (DMD) and Extended Dynamic Mode Decomposition (EDMD), are useful to analyse nonlinear dynamical systems\cite{schmid_dynamic_2010,williams_data-driven_2015}. In parallel, neural networks help learn Koopman latent representations directly from data \cite{lusch_deep_2018,yeung_learning_2017}. However, despite their theoretical and methodological advances, as well as applications in studying fluid dynamics, there has been comparatively less emphasis on biomedical time-series and disease progression modeling.

\subsection{Deep Learning for Disease Progression}
A powerful way to model complex temporal dynamics in longitudinal clinical data is to use deep learning. Recurrent Neural Networks (RNNs), such as Long Short-Term Memory (LSTM) and Gated Recurrent Units (GRUs), show promise for neurodegenerative disease prediction \cite{mehdipour_ghazi_training_2019,lee_predicting_2019}. Models such as time-aware LSTMs and GRU-D are useful for handling irregular sampling, as they incorporate temporal awareness or hierarchical attention \cite{baytas_patient_2017, che2018recurrent, choi_retain_2016}. Additionally, deep learning approaches combining neural networks and survival analysis show promise to manage long-range temporal dependencies \cite{rajkomar_scalable_2018,vaswani_attention_2017-1}. Linking deep learning with dynamical systems, recent work, such as neural ODEs \cite{chen_neural_2019}, continuous-time modeling methods \cite{rubanova_latent_2019}, and physics-informed NNs \cite{raissi_physics-informed_2019}, is beginning to unveil dynamic disease evolution. Finally, while deep learning methods, such as transformer- and convolution-based approaches, have shown notable performance in disease prediction \cite{kitaev2020reformerefficienttransformer, bai_empirical_2018}, most deep learning models at present lack interpretability.

\subsection{Attention Mechanisms in Biomedical Applications}
Through selective weighting, attention mechanisms advance sequence modeling   \cite{vaswani_attention_2017-1}. In biomedical applications, such self-attention helps reveal clinically significant patterns \cite{liu_understanding_2023}, handle irregular time series \cite{luo_hitanet_2020}, and, with topological priors, capture patient trajectory structure \cite{li_behrt_2020}. For complex biomedical data analysis, hierarchical attention \cite{ma_dipole_2017} is particularly promising, as it can selectively focus on relevant features both within and across modalities and over time.

\section{Physics-informed Neural Koopman Machine}
\label{sec:method}

\subsection{Notations}
Here, we present the notations used throughout this paper. We denote $\vx$, $\vz$, $\vc$, and $\vy$ as the features, their latent representations, control signals, and disease-related outcomes, respectively. We let $\mathcal{K}$ and $\mathbf{K}$ denote the infinite-dimensional Koopman operator and its finite-dimensional approximation, respectively. We use $\boldsymbol{\Phi}$ and $\phi$ to represent the Koopman observables in their vector and scalar forms, respectively. We use Greek letters, $\mW$, and $\vb$ to denote model parameters. We utilize $f$, $g$, and $\sigma$ to represent smooth functions. We use superscripts $^{\text{ref}}$, $^{\text{enc}}$, $^{\text{feat}}$, $^{\text{time}}$ to indicate \textit{refined} and \textit{encoded} representations, and \textit{feature}- and \textit{time}-related signals, respectively. We use subscripts $_{\text{enc}}$, $_{\text{dec}}$, $_{\text{pred}}$, $_{\text{koop}}$, and $_{\text{spectral}}$ to indicate parameters related to the \textit{encoder} and \textit{decoder}, and losses related to \textit{prediction}, the \textit{Koopman} term, and its \textit{spectral} regularization.

\subsection{Data}

Data used in the preparation of this article were obtained from the Alzheimer's Disease Neuroimaging
Initiative (ADNI) database \url{(adni.loni.usc.edu)}. The ADNI was launched in 2003 as a public-private
partnership, led by Principal Investigator Michael W. Weiner, MD. The primary goal of ADNI has been to
test whether serial magnetic resonance imaging (MRI), positron emission tomography (PET), other
biological markers, and clinical and neuropsychological assessment can be combined to measure the
progression of mild cognitive impairment (MCI) and early Alzheimer's disease (AD).

\subsection{Preliminaries of the Koopman Operator Theory (KOT)}

KOT enables the linear representation of nonlinear systems by lifting dynamics to a higher-dimensional space. For a nonlinear system $\vx_{t+1} = f(\vx_t)$, the Koopman operator $\mathcal{K}$ acts on observable functions: $(\mathcal{K}g)(\vx) = g(f(\vx))$. Specifically, we approximate this using $d_z$ observables $\boldsymbol{\Phi}(\vx) = [\phi_1(\vx), \ldots, \phi_{d_z}(\vx)]^\top$ to obtain a lifted representation $\vz_t = \boldsymbol{\Phi}(\vx_t)$, where dynamics become linear:
\begin{equation}
\label{alg:lifting}
\vz_{t+1} \approx \mathbf{K} \vz_t,
\end{equation}
with $\mathbf{K} \in \R^{d_z \times d_z}$ approximating the infinite-dimensional operator $\mathcal{K}$, where $d_z$ is the latent dimension. \textbf{Eq.} \textcolor{blue}{\ref{alg:lifting}}, therefore, enables linear analysis of complex nonlinear disease dynamics (in autonomous cases; we extend to controlled dynamics below).

\subsection{Our Contributions to Improving the KOT}

We formulate the longitudinal many-to-many forecasting problem as follows: one seeks to predict future disease-related cognitive scores $\vy_{t+1} \in \R^3$ (e.g., MMSE, CDRSB, and ADAS13) at time $t+1$ using observed multimodal biomarkers $\vx_t \in \R^{44}$ at three time points up to time $t$, where the multimodal biomarkers consist of 44 features, including one genetic feature (APOE4), three CSF features (Amyloid-beta (A$\beta$), t-tau, and p-tau protein), three PET imaging features (FDG, PIB, and AV45), 18 MRI imaging features (17 Schaefer networks assembled from 202 cortical thickness regions and Intracranial Volume (ICV)), and demographic information (two numeric and 17 categorical features).

Our work introduces three contributions to advance KOT for future disease forecasting using past multimodal data.

\textbf{Contribution 1: Simultaneous Learning of Observables and Koopman Dynamics.} Unlike classical Extended Dynamic Mode Decomposition (EDMD), which requires predefined observables, we jointly learn both the lifting function $\boldsymbol{\Phi}(\cdot)$ and the Koopman matrix $\mathbf{K}$ in an end-to-end fashion. Specifically, we parameterize:
\begin{equation}
\vz_t = \boldsymbol{\Phi}(\vx_t; \vtheta_{\text{enc}}), 
\qquad 
\vz_{t+1} = \mathbf{K} \vz_t^{\text{ref}} + \vc_t,
\end{equation}
where $\boldsymbol{\Phi}(\cdot; \vtheta_{\text{enc}})$, $\mathbf{K} \in \R^{d_z \times d_z}$, $\vz_t^{\text{ref}}$ (refined latent representation via residual blocks) and $\vc_t$ are a neural encoder with parameters $\vtheta_{\text{enc}}$, the learnable Koopman matrix, the temporally refined latent representation, and an attention-derived control vector, respectively (see \textbf{Sec.} \textcolor{purple}{\ref{subsec:architecture}}).

\textbf{Contribution 2: Biologically-Informed Feature Architecture.} We decompose the input feature space into biologically meaningful modalities $\vx_t = [\vx_t^{(g)} : g \in \sG]$, where $\sG = \{\text{genetic, CSF, PET, MRI, demographics}\}$. Each modality is processed by dedicated, modality-specific encoders before fusion. This approach preserves modality-specific biological structures and enables cross-modal interactions in the lifted space; $\beta$-knowledge guides the aggregation (e.g., Schaefer regions to 17 networks).

\textbf{Contribution 3: Hierarchical Attention-Based \mbox{Control}.}
We extend the autonomous Koopman dynamics to incorporate
personalized control via hierarchical attention that processes
temporal windows of refined latent states $[\vz_{t-w+1}^{\text{ref}}, \ldots, \vz_t^{\text{ref}}]$, 
where $w$ is the window size. The control vector $\vc_t$ adapts to
relevant biomarker patterns and subject-specific longitudinal trajectories 
across both temporal and modality dimensions.

\subsection{Optimizing the Multimodal Longitudinal Forecasting Problem: Technical Overview}

The complete pipeline connects input biomarkers to cognitive predictions via:
$\vx_t \xrightarrow{\boldsymbol{\Phi}} \vz_t \xrightarrow{\text{refine}} \vz_t^{\text{ref}}
\xrightarrow{\mathbf{K}, \vc_t} \vz_{t+1} \xrightarrow{\mathcal{D}} \hat{\vy}_{t+1}$,
where $\mathcal{D}:= \mathcal{D}(\cdot; \vtheta_{\text{dec}})$ is a neural decoder with parameters
$\vtheta_{\text{dec}}$. Our joint optimization objective combines two complementary loss terms:
\begin{subequations}\label{eq:total_loss_short}
\begin{align}
\mathcal{L} &= \mathcal{L}_{\text{pred}} + \lambda\,\mathcal{L}_{\text{koop}} + \mathcal{R}_{\text{spec}}, \label{eq:total_loss_main}\\
\mathcal{L}_{\text{pred}} &= \frac{1}{N}\sum_{i=1}^N \|\mathcal{D}(\vz_{t+1}^{\text{ref},(i)}; \vtheta_{\text{dec}}) \nonumber \\
&\quad - \vy_{t+1}^{(i)}\|_2^2, \label{eq:Lpred}\\
\mathcal{L}_{\text{koop}} &= \frac{1}{N(T-1)}\sum_{i=1}^N\sum_{t=1}^{T-1} \|\vz_{t+1}^{\text{ref},(i)} \nonumber \\
&\quad - \mathbf{K}\vz_t^{\text{ref},(i)} - \vc_{\text{control}}^{(i)}\|_2^2, \label{eq:Lkoop}\\
\mathcal{R}_{\text{spec}} &= \eta \big( \max\{0, \|\mathbf{K}\|_2^2 - \rho^2\} \big)^2. \label{eq:Lspec}
\end{align}
\end{subequations}

Here, $\mathcal{L}_{\text{pred}}$ enforces accurate prediction of cognitive scores across the $N$ sequences, $\mathcal{L}_{\text{koop}}$ ensures Koopman consistency by requiring that the predicted latent evolution matches the true encoded next state, and \textcolor{black}{$\mathcal{R}_{\text{spec}}$ regularizes the Koopman matrix to ensure stable latent dynamics over time.}

\subsection{The $\alpha$ Knowledge and $\beta$ Knowledge}

\textit{The analytical ($\alpha$) knowledge} is the \textit{analytical framework} of the NKM. It leverages the Koopman operator's linearization capabilities with hierarchical attention and temporal encoding to convert nonlinear disease dynamics into linear latent-space representations. \textit{The biological ($\beta$) knowledge} is the \textit{biological insights} which organize the multimodal feature space — including functional brain network features (aggregated via Schaefer networks), imaging, CSF biomarkers, genetic markers, and demographics — into biologically relevant categories. By leveraging known intra- and inter-modality relationships, $\beta$ knowledge guides a feature-aware encoding process instead of treating all inputs equally. 

Below, we detail how these knowledge types are implemented as specific architectural components.

\subsection{The Physics-informed Neural Koopman Machine: Architecture Overview}
\label{subsec:architecture}
NKM is a physics-informed, modular machine learning architecture that integrates the Koopman operator theory (KOT), hierarchical attention (via $\alpha$ knowledge), and biologically structured encoding (via $\beta$ knowledge) to simultaneously predict
future changes in multiple AD-related cognitive scores based on past multimodal data.

In brief, the NKM maps nonlinear multimodal biomarker trajectories into Koopman space, where dynamics evolve linearly under personalized control.
NKM consists of five sequential components:
    Multimodal Feature Encoding,
    Temporal Representation,
    Hierarchical Attention Control,
    Koopman State Transition,
    and Outcome Decoder (see \textbf{Fig.} \textcolor{blue}{\ref{fig:overall_architecture}} for the architecture of the NKM and Algorithm~\ref{alg:nkm} for a summary of its training pipeline).

We detail each component in the following subsections.

\begin{algorithm}[t]
\caption{Neural Koopman Machine Training Pipeline}
\label{alg:nkm}
\small
\begin{algorithmic}[1]
\Require Modality groups $\sG$, multimodal sequences $\{\vx_t^{(g)}\}_{g\in\sG}$, cognitive scores $\{\vy_t\}$, window size $w=3$
\State Initialize encoders $\{\mathcal{E}^{(g)}\}$, Koopman matrix $\mathbf{K}$, attention modules, decoder $\mathcal{D}$
\For{each training batch}
  \ForAll{$g \in \sG$}
    \State $\vz_t^{(g)} \gets \mathcal{E}^{(g)}\!\big(\vx_t^{(g)}\big)$
  \EndFor
  \State $\vz_t^{\text{enc}} \gets \operatorname{Fusion}\!\left([\vz_t^{(g)}]_{g \in \sG}\right)$
  \For{$t' \gets t-w+1$ \textbf{to} $t$}
    \State $\vz_{t'}^{\text{ref}} \gets \vz_{t'}^{\text{enc}} + \operatorname{ResBlock}\!\big(\vz_{t'}^{\text{enc}}\big)$
  \EndFor
  \Statex
  \State $\vc_t \gets \operatorname{HierAttn}\!\left([\vz_{\tau}^{\text{ref}}]_{\tau=t-w+1}^{t}\right)$
  \State $\vz_{t+1} \gets \mathbf{K}\,\vz_t^{\text{ref}} + \vc_t$
  \State $\hat{\vy}_{t+1} \gets \mathcal{D}(\vz_{t+1})$
  \State Compute loss $\mathcal{L}$ and update parameters
\EndFor
\end{algorithmic}
\end{algorithm}

\subsection{Multimodal Feature Encoding Using $\beta$ knowledge}

The $\beta$ knowledge guides the NKM to first partition features based on modalities and then process them using a modality-specific encoder, thereby preserving biological interpretability, maintaining meaningful representations, and enabling group-level attention reasoning in later stages. At each timestep $t$, NKM decomposes the input $\vx_t \in \R^{25}$ (or $\vx_t \in \R^{44}$ when including demographic information) into feature groups: genetics, CSF, PET, functional brain networks (Schaefer's 17 networks), and, when available, demographics.
A modality-specific encoder then processes each modality $g \in \sG$ as follows:
\begin{equation}
\vz_t^{(g)} = \mathcal{E}^{(g)} \big( \vx_t^{(g)} \big),
\quad
\vz_t^{\text{enc}} = \texttt{Fusion}\!\left(\left[\vz_t^{(g)} : g \in \sG\right]\right).
\end{equation}
For each modality $g$, $\vz_t^{(g)} \in \R^{d_g}$, and the fusion operation (with projection) yields $\vz_t^{\text{enc}} \in \R^{d_z}$. These encoders learn data-driven approximations of the Koopman observables $\boldsymbol{\Phi}(\vx_t)$. Here, we consider the core biological features as $\R^{25}$ (including 17 Schaefer's networks, three PET features, three CSF features, one APOE4 genetic feature, and one intracranial volume (ICV) feature);  the full input vector, including demographic information, is $\R^{44}$ for comprehensive modeling.

\subsection{Temporal Representation}
\label{subsec:temporal-attention-control}

Subsequently, the NKM refines the encoded latent vectors over a temporal window $\{\vz_{t-w+1}^{\text{enc}}, \dots, \vz_t^{\text{enc}}\}$: $\vz_{\tau}^{\text{ref}} = \vz_{\tau}^{\text{enc}} + \texttt{ResBlock}(\vz_{\tau}^{\text{enc}}), 
\quad \forall \tau \in \{t-w+1,\dots,t\}$.
This residual transformation, consisting of five blocks, enhances temporal structure while preserving the original signals, thereby improving the modeling of disease trajectories.

\subsection{Hierarchical Attention Control Using $\alpha$ knowledge}
\label{subsec:attention-control}
To adaptively guide latent evolution based on historical multimodal biomarkers, NKM employs a hierarchical attention mechanism that encodes both time- and modality-specific context. Formally, given the refined latent window $[\vz_{t-w+1}^{\text{ref}}, \dots, \vz_t^{\text{ref}}]$ and modality-specific embeddings $\{\vz_t^{(g)}\}_{g \in \sG}$, the attention module computes a pair of weighted summaries:
\begin{equation}
    \vc_t^{\text{time}} = \sum_{i=0}^{w-1} \alpha_i \, \vz_{t-i}^{\text{ref}}
    \qquad\text{and}\qquad
    \vc_t^{\text{feat}} = \sum_{g \in \sG} \beta_g \, \vz_t^{(g)},
\end{equation}
where
$\alpha_i = \frac{\exp\!\big(\frac{\vq_t^\top \vk_{t-i}}{\sqrt{d_k}}\big)}{\sum_{j=0}^{w-1} \exp\!\big(\frac{\vq_t^\top \vk_{t-j}}{\sqrt{d_k}}\big)}$ 
and 
$\beta_g = \frac{\exp\!\big(\frac{\vv_t^\top \vu_g}{\sqrt{d_k}}\big)}{\sum_{g' \in \sG} \exp\!\big(\frac{\vv_t^\top \vu_{g'}}{\sqrt{d_k}}\big)}$
are attention weights computed using scaled dot-product attention, with query $\vq_t$, keys $\vk_\tau = \mW_k \vz_\tau^{\text{ref}}$ (similarly for $\vu_g$), and scaling factor $\sqrt{d_k}$ for temporal attention, and feature query $\vv_t$ and keys $\vu_g$ for modality attention. This formulation weights recent visits more heavily while allowing data-driven weighting of both temporal and modality-specific information.

A learned gating function then fuses time- and feature-specific control signals, namely $\vc_t^{\text{time}}$ and $\vc_t^{\text{feat}}$, into a unified control vector $\vc_t$:
\begin{equation}
\vc_t = g \cdot \vc_t^{\text{feat}} + (1 - g) \cdot \vc_t^{\text{time}}, 
\qquad 
g = \sigma\!\big([\vz_t^{\text{ref}}; \vc_t^{\text{time}}]\mW_g + \vb_g\big),
\end{equation}
where $\sigma(\cdot)$ is the sigmoid function, $\mW_g \in \R^{(d_z + d_z) \times d_z}$, and $\vb_g \in \R^{d_z}$. The concatenation allows the gating mechanism to consider both the current refined state and the temporal summary, ensuring a balanced fusion. This attention-derived control signal $\vc_t$ adaptively steers the Koopman state transition, enabling personalized modeling of the disease-related cognitive score dynamics and highlighting critical visits and key biomarkers for each individual.

\subsection{Koopman State Transition}
To evolve the latent state, we use a learned Koopman matrix $\mathbf{K}$ with attention-based control:
\begin{equation}
\vz_{t+1} = \mathbf{K} \vz_t^{\text{ref}} + \vc_t,
\end{equation}
where $\vz_t^{}$ denotes the refined latent representation after temporal processing. This formulation decomposes disease dynamics into two complementary components: the Koopman matrix $\mathbf{K}$ models short-term temporal dependencies and variations around the mean trajectory, while the control vector $\mathbf{c}_t$ encodes long-term disease progression and personalized deviations specific to each patient's clinical profile. The spectral constraint $\|\mathbf{K}\|_2 < 1$ ensures stability of the temporal component without limiting the model's ability to capture progressive cognitive decline through $\mathbf{c}_t$.

The matrix $\mathbf{K} \in \R^{d_z \times d_z}$ approximates the infinite-dimensional Koopman operator $\mathcal{K}$, modeling global cognitive dynamics, while $\vc_t$ captures personalized deviations via the attention-derived control vector. This formulation generalizes the autonomous relation $\vz_{t+1} = \mathbf{K} \vz_t$ by incorporating patient-specific dynamics. Unlike classical EDMD, we jointly learn both the observables $\boldsymbol{\Phi}$ (via neural encoders) and the Koopman matrix $\mathbf{K}$ in an end-to-end manner, enabling adaptation to cognitive decline trajectories.

\subsection{Forecasting Future Disease Outcome}
Finally, we predict multiple future cognitive scores from the evolved latent state using a simple regression head:
\begin{equation}
    \hat{\vy}_{t+1} = \mathcal{D}(\vz_{t+1}).
\end{equation}
The decoder simultaneously predicts MMSE, CDRSB, and ADAS13, allowing the NKM to capture shared structure across cognitive landscapes without having to train three different neural networks.

\subsection{Training and Optimization}

We train the NKM end-to-end using the composite loss in \textbf{Eq.} \textcolor{blue}{\ref{eq:total_loss_short}}, augmented with spectral regularization to ensure stable long-term predictions:
\begin{equation}
\label{eq:spectral_reg}
\mathcal{R}_{\text{spectral}}
= \bigl(\max\{0,\, \|\mathbf{K}\|_2^2 - \rho^2\}\bigr)^2 ,
\end{equation}
where $\rho \in (0,1)$ is a target upper bound for the spectral norm of $\mathbf{K}$. We optimize the objective using AdamW with early stopping based on validation loss, and select the hyperparameters $\lambda$ and $\gamma$ via cross-validation.

\underline{Stability and accuracy of Koopman predictions}. The above regularization ensures that $\|\mathbf{K}\|_2 < 1$, providing a theoretical guarantee of stable out-of-sample predictions. The three loss components act complementarily: $\mathcal{L}_{\text{pred}}$ enforces accurate cognitive score prediction in the output space, $\mathcal{L}_{\text{koop}}$ ensures linear evolution in the latent space to maintain Koopman consistency, and $\mathcal{R}_{\text{spectral}}$ constrains the dynamics to remain stable over long time horizons. See the error bound analysis in \textbf{Sec.} \textcolor{purple}{\ref{subsec:theory}} for details.

\subsection{Theoretical Properties}
\label{subsec:theory}
%This section presents the theoretical properties of the error bound and the convergence of gradient descent for the Neural Koopman Machine.
\begin{theorem}[Error bound for $\varepsilon$-approximately Koopman-invariant observables]
\label{thm:error_bound}
If the learned observables $\boldsymbol{\Phi}$ are $\varepsilon$-approximately Koopman invariant, i.e.,
\begin{equation}
  \Bigl\|\mathcal{K}\phi_j - \sum_i K_{ij}\phi_i \Bigr\|_{\mathcal{L}^2(\sA)} \le \varepsilon ,
\end{equation}
then, for any $\tau \ge 1$,
\begin{align}
\bigl\|\boldsymbol{\Phi}(\vx_{t+\tau}) - \mathbf{K}^\tau \boldsymbol{\Phi}(\vx_t)\bigr\|_2
&\le \varepsilon \sqrt{d_z}\sum_{i=0}^{\tau-1}\|\mathbf{K}\|_2^i \label{eq:geom-series}\\
&= \frac{\varepsilon \sqrt{d_z}\bigl(1-\|\mathbf{K}\|_2^{\,\tau}\bigr)}{1-\|\mathbf{K}\|_2},
\quad \|\mathbf{K}\|_2<1. \label{eq:closed-form}
\end{align}
\end{theorem}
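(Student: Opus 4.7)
The plan is to prove the bound by induction on $\tau$, using a one-step telescoping decomposition together with the submultiplicativity of the spectral norm, and then identify the resulting geometric sum with its closed form when $\|\mathbf{K}\|_2<1$.

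First I would dispense with the base case $\tau=1$. Writing the $j$-th component $(\boldsymbol{\Phi}(\vx_{t+1})-\mathbf{K}\boldsymbol{\Phi}(\vx_t))_j=(\mathcal{K}\phi_j)(\vx_t)-\sum_i K_{ij}\phi_i(\vx_t)$, the $\varepsilon$-approximate Koopman invariance hypothesis bounds each coordinate (interpreted pointwise on $\sA$, or equivalently after taking an $\mathcal{L}^2(\sA)$ norm on both sides and applying Fubini) by $\varepsilon$. Summing over $j=1,\dots,d_z$ and taking square roots yields $\|\boldsymbol{\Phi}(\vx_{t+1})-\mathbf{K}\boldsymbol{\Phi}(\vx_t)\|_2\le \varepsilon\sqrt{d_z}$, which is exactly the $\tau=1$ case of \eqref{eq:geom-series}.

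Next, for the inductive step, I would add and subtract $\mathbf{K}\,\boldsymbol{\Phi}(\vx_{t+\tau-1})$ to obtain the decomposition
\begin{equation}
\boldsymbol{\Phi}(\vx_{t+\tau}) - \mathbf{K}^\tau \boldsymbol{\Phi}(\vx_t)
= \bigl[\boldsymbol{\Phi}(\vx_{t+\tau}) - \mathbf{K}\boldsymbol{\Phi}(\vx_{t+\tau-1})\bigr]
+ \mathbf{K}\bigl[\boldsymbol{\Phi}(\vx_{t+\tau-1}) - \mathbf{K}^{\tau-1}\boldsymbol{\Phi}(\vx_t)\bigr].
\end{equation}
Applying the triangle inequality and the submultiplicativity $\|\mathbf{K}\vv\|_2\le \|\mathbf{K}\|_2\|\vv\|_2$, the first bracket is bounded by the base case ($\varepsilon\sqrt{d_z}$), while the second bracket is controlled by the inductive hypothesis. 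This gives $\varepsilon\sqrt{d_z}+\|\mathbf{K}\|_2\cdot\varepsilon\sqrt{d_z}\sum_{i=0}^{\tau-2}\|\mathbf{K}\|_2^i=\varepsilon\sqrt{d_z}\sum_{i=0}^{\tau-1}\|\mathbf{K}\|_2^i$, closing the induction and yielding \eqref{eq:geom-series}.

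Finally, when $\|\mathbf{K}\|_2<1$ I would apply the standard finite geometric series identity $\sum_{i=0}^{\tau-1}r^i=(1-r^\tau)/(1-r)$ with $r=\|\mathbf{K}\|_2$ to obtain the closed form \eqref{eq:closed-form}. The spectral regularizer $\mathcal{R}_{\text{spec}}$ introduced in \eqref{eq:Lspec} ensures precisely this regime, so the bound remains uniform in $\tau$ and in fact converges to $\varepsilon\sqrt{d_z}/(1-\|\mathbf{K}\|_2)$ as $\tau\to\infty$.

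The only subtle point — and what I would flag as the main obstacle — is the mismatch between the $\mathcal{L}^2(\sA)$ norm appearing in the hypothesis and the pointwise Euclidean norm in the conclusion. The cleanest reading is to treat the whole inequality as holding in $\mathcal{L}^2(\sA)$ after composing with the flow (which preserves the invariant measure on $\sA$), so that the operator-theoretic bound transfers unchanged along each step of the induction; a more careful formulation would either strengthen the hypothesis to a pointwise bound on $\sA$ or restate the conclusion as an $\mathcal{L}^2(\sA)$ bound on the function $\vx\mapsto \|\boldsymbol{\Phi}(\vx_{t+\tau})-\mathbf{K}^\tau\boldsymbol{\Phi}(\vx_t)\|_2$. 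Either interpretation leaves the algebraic telescoping argument above intact.
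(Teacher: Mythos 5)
Your proof is correct and follows essentially the same route as the paper's: a one-step bound of $\varepsilon\sqrt{d_z}$ from the invariance hypothesis, an inductive telescoping step using the triangle inequality and $\|\mathbf{K}\vv\|_2\le\|\mathbf{K}\|_2\|\vv\|_2$, and the finite geometric series identity for the closed form. The norm mismatch you flag is also resolved in the paper exactly as you suggest, by strengthening the hypothesis to a uniform pointwise Euclidean bound on the attractor $\sA$ before running the induction.
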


\begin{proof}  
See \textbf{Sec.} \ref{Proof:thm1 and corollary1}.
%The Koopman/Galerkin setting follows \cite{Mezic2013KoopmanReview,williams_data-driven_2015,korda2018convergence}; the geometric-series and matrix-power bounds with the operator norm use standard matrix analysis \cite{HornJohnson2012MatrixAnalysis}.
\end{proof}

\begin{corollary}[Asymptotic error bound]
\label{thm:asymptotic_error_bound}
It follows that, for long-term predictions (as $\tau\to\infty$): % as per the derivation from \cite{HornJohnson2012MatrixAnalysis},
\begin{equation}
\lim_{\tau \to \infty}
\bigl\|\boldsymbol{\Phi}(\vx_{t+\tau}) - \mathbf{K}^\tau \boldsymbol{\Phi}(\vx_t)\bigr\|_2
\le \frac{\varepsilon \sqrt{d_z}}{1-\|\mathbf{K}\|_2}.
\end{equation}

\end{corollary}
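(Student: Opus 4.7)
The plan is to invoke Theorem~\ref{thm:error_bound} directly and pass to the limit $\tau \to \infty$ in its closed-form bound. Since the hard analytical work — bounding the $\tau$-step prediction error by a telescoping/geometric sum of Koopman-invariance residuals — has already been carried out in proving Theorem~\ref{thm:error_bound}, the corollary reduces to a one-line convergence argument on the right-hand side.

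First, I would recall that the spectral regularization $\mathcal{R}_{\text{spec}}$ in~\eqref{eq:Lspec} enforces $\|\mathbf{K}\|_2 < 1$, so the closed-form branch~\eqref{eq:closed-form} of Theorem~\ref{thm:error_bound} applies for every finite $\tau \ge 1$. Next, I would observe that $\|\mathbf{K}\|_2 \in [0,1)$ implies $\|\mathbf{K}\|_2^{\,\tau} \to 0$ as $\tau \to \infty$, so the right-hand side of~\eqref{eq:closed-form} is monotonically increasing in $\tau$ and converges to $\varepsilon\sqrt{d_z}/(1-\|\mathbf{K}\|_2)$. Finally, since the $\tau$-step bound holds uniformly in $\tau$, taking the limit on both sides preserves the inequality and yields the stated asymptotic constant.

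The main (and essentially only) subtlety is interpretational rather than computational: the left-hand side $\|\boldsymbol{\Phi}(\vx_{t+\tau}) - \mathbf{K}^\tau \boldsymbol{\Phi}(\vx_t)\|_2$ need not be a Cauchy sequence in $\tau$, since disease trajectories are not assumed to be recurrent or stationary and the encoded orbit $\boldsymbol{\Phi}(\vx_{t+\tau})$ could itself drift or oscillate. For this reason I would read the displayed $\lim$ as a $\limsup$ (which always exists in $[0,\infty]$ and is finite here thanks to the uniform bound). Under that reading the argument is a direct monotone limit computation, and no machinery beyond Theorem~\ref{thm:error_bound} and the spectral constraint is required.
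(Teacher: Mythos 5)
Your proposal is correct and follows essentially the same route as the paper: the paper likewise takes the finite-$\tau$ geometric bound of Theorem~\ref{thm:error_bound} and lets $\tau \to \infty$, using $\|\mathbf{K}\|_2 < 1$ so that the right-hand side tends to $\varepsilon\sqrt{d_z}/(1-\|\mathbf{K}\|_2)$. Your remark that the left-hand side need not converge and the displayed $\lim$ should be read as a $\limsup$ is a sensible refinement that the paper's proof leaves implicit, but it does not change the argument.
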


%\begin{proof}  
%See \textbf{Sec.} \ref{Proof:thm1 and corollary1}.
%\end{proof}

In simple terms, \textbf{Theorem} \ref{thm:error_bound} states that if the learned observables are nearly Koopman-invariant (up to an error), then the multi-step prediction error grows in a controlled manner with the number of steps $\tau$.
\textbf{Corollary} \ref{thm:asymptotic_error_bound} further states that, for long-term prediction ($\tau\to\infty$), this error approaches a finite limit. These bounds, therefore, implies bounded multi-step prediction errors, meaning that the predictions do not diverge over time; thus, they motivate our spectral regularization and support the stability of NKM's long-horizon forecasts. For further theoretical foundations for studying dynamical systems through linear operators, see \cite{Mezic2013KoopmanReview,korda2018convergence,HornJohnson2012MatrixAnalysis,Chen1999LinearSystems}.

In \textbf{Sec.} \ref{sec:convergence}, we provide convergence and error bounds for the training of NKM by specializing existing nonconvex block-coordinate descent results and Koopman approximation theory to our setting.

\begin{figure*}[ht]
    \centering
    \includegraphics[width=1\textwidth]{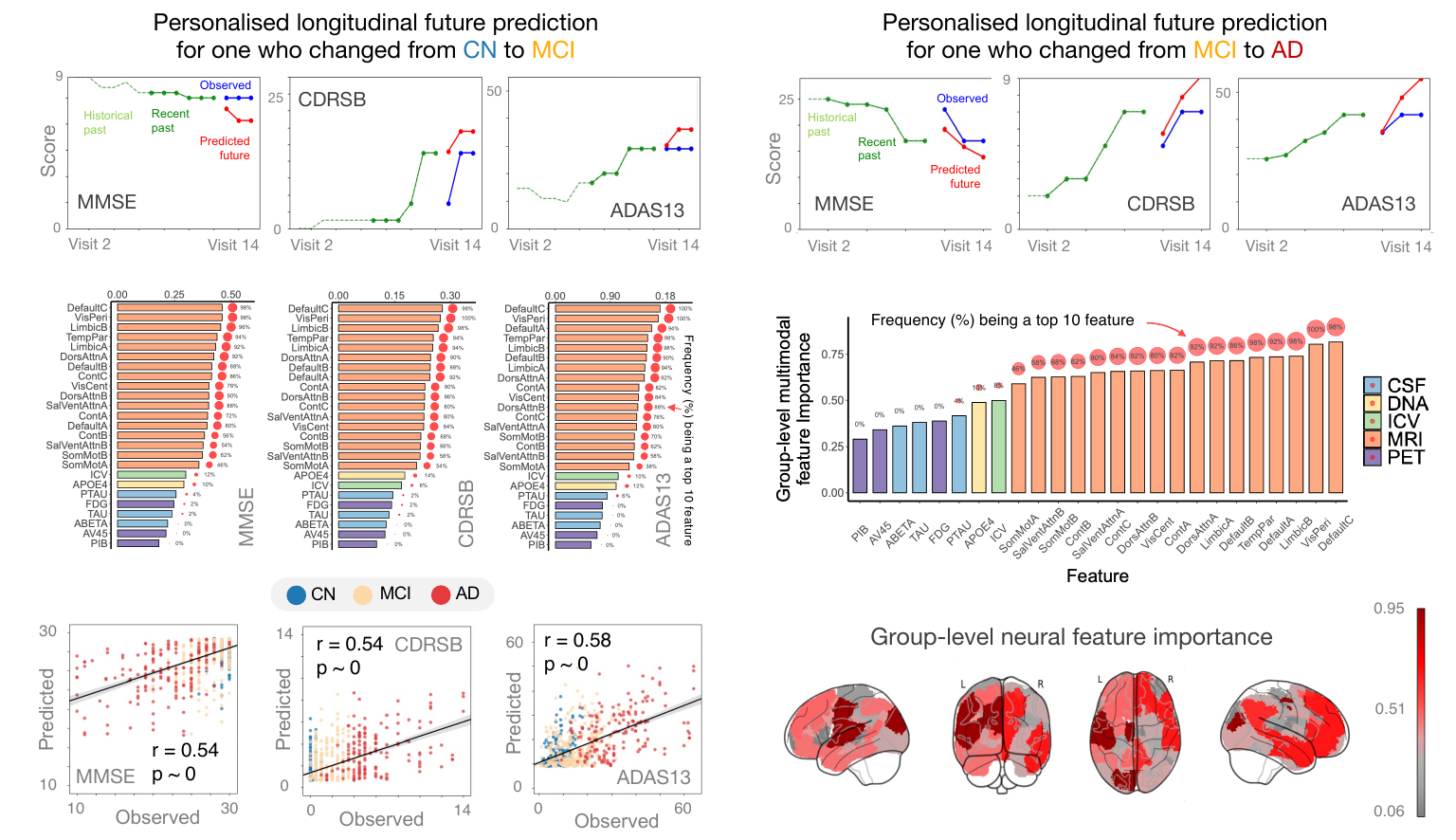}
    \caption{ \textbf{Using Neural Koopman Machine to forecast personalized future disease trajectory of Alzheimer's disease.} The top panels illustrate personalized longitudinal predictions for two patients—one progressing from cognitively normal (CN) to mild cognitive impairment (MCI), and the other from MCI to Alzheimer's disease (AD). Colored trajectories depict historical (light green), recent (dark green), predicted future (red), and observed future (blue) values for three cognitive assessments: MMSE, CDRSB, and ADAS13. The middle-left panel shows feature importance and selection consistency for each of the three cognitive scores, while the middle-right panel displays their aggregate, group-level counterparts (i.e., feature weights averaged across the three scores and 50 runs). In the bar–circle visualization, bar length denotes feature importance and circle size reflects the frequency with which a feature appears among the top 10 features across runs. The lower-left panel reports predictive performance for the three cognitive scores, and the lower-right panel depicts neural feature importance across the brain.}
    \label{fig:neural_koopman_analysis}
\end{figure*}

\section{Results}
\label{sec:experiments}

\subsection{Data and Experiment Setting}
\label{subsub:expsetup}
To evaluate NKM's efficacy, we applied it to analyze data from the Alzheimer's Disease Neuroimaging Initiative (ADNI) \cite{weiner_alzheimers_2012}. Specifically, we aimed to forecast multiple AD-related cognitive scores using historical multimodal data.

The multimodal data include 44 features per time step: 18 MRI features measuring Intracranial Volume (ICV) and cortical thickness of 202 brain regions organized by the Schaefer's 17-network parcellation (i.e., 17 cortical thickness features plus ICV), PET imaging (FDG, PIB, and AV-45), genetic (APOE4), CSF (Amyloid-beta (A$\beta$), t-tau, and p-tau protein), and demographics (two numeric and 17 categorical features). The longitudinal outcomes consist of three cognitive impairment measures, including Mini-Mental State Examination (MMSE), Clinical Dementia Rating - Sum of Boxes (CDRSB), and ADAS-Cog13 (ADAS13) scores.

For sequence modeling, we considered subjects with at least four time points and excluded subjects with fewer than four time points. %33.91\% with fewer T
This results in final sequences from 949 patients. For each subject, we constructed \textcolor{black}{temporal windows of width three} from consecutive visits (see \textbf{Sec.} \ref{sec:window_sizes} for a discussion on window sizes). \textcolor{black}{Once the NKM model is trained, for each temporal window we use features from the first three time points as input to predict the outcomes at the fourth time point, and the observed outcomes from the fourth time point are used to evaluate forecasting performance} (see \textbf{Fig.} \textcolor{blue}{\ref{fig:neural_koopman_analysis}}).

To ensure that no data leakage occurred during training and evaluation, we employed a subject-stratified five-fold cross-validation procedure. To handle missing values, we used k-nearest neighbors (KNN) imputation within each fold and standardized all features using parameters estimated exclusively from the training data, again to avoid leakage.
To evaluate the forecasting performance, we used Pearson's and Spearman's correlation coefficients as well as mean absolute error (MAE) and root mean square error (RMSE) between the predicted and observed scores at the next visit, averaged across the three cognitive scores. We trained each model for up to 300 epochs with a batch size of 64; convergence, however, typically occurred earlier due to the early stopping criteria in the training pipeline. We report the forecasting performance using NKM and baseline methods in \textbf{Table} \textcolor{blue}{\ref{tab:metrics_summary}}.

\begin{figure*}[ht]
    \centering
    \includegraphics[width=1\textwidth]{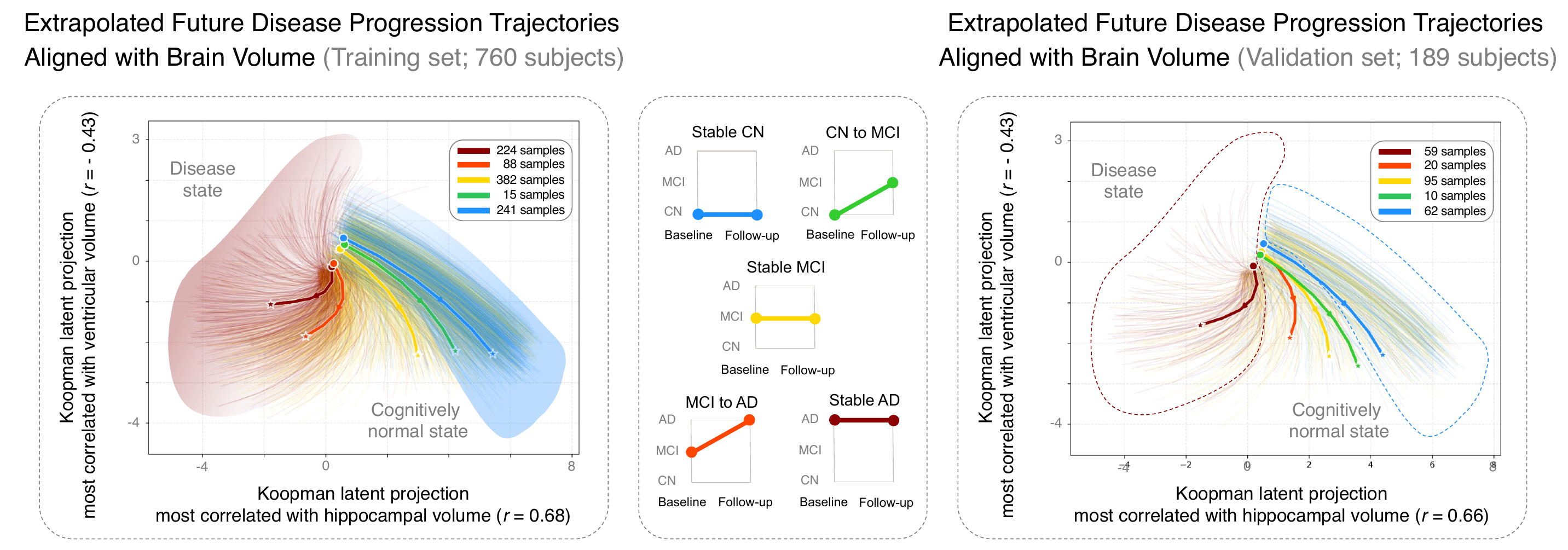}
    \caption{\textbf{The Neural Koopman Machine (NKM) uncovers unsupervised, neurobiologically grounded dynamics underlying Alzheimer's disease (AD).} 
This visualization maps the model's predictions onto a coordinate system that mirrors biology: the x-axis aligns with hippocampal health, and the y-axis with ventricular enlargement. It is important to note that, while we used brain volumes to orient these axes, the data points are generated solely by the NKM model's internal representations. To generate these trajectories, we used a sliding window approach. The model observes a patient for three time points and then projects their likely progression for the next five steps. Because we slide this window forward one step at a time, a single subject contributes multiple overlapping trajectories(samples) as they move through the study, as noted in the figure. In the plots, thick lines show the average path for each group, while thin lines show individual variation. The shaded regions \textbf{left} represent the density of stable CN and AD subjects in the training set. These same boundaries are replicated as dashed outlines in the validation plot \textbf{right} to serve as a fixed reference. This highlights the model's consistency, showing that validation trajectories fall primarily within the stable zones defined during training, despite minor distributional shifts.}
    \label{fig:koopman_interpretability}
\end{figure*}

\subsection{Model Comparisons}

To assess the efficacy of the NKM, we compared it against baseline longitudinal time series models, including linear regression, gradient boosting methods (e.g., XGBoost), dynamical modeling approaches (e.g., EDMD), recurrent neural networks (e.g., LSTM and GRU-D \cite{che2018recurrent}), and specialized time-series models (e.g., TimesNET \cite{wu_timesnet_2023}). \textbf{Table} \textcolor{blue}{\ref{tab:metrics_summary}} presents the comparison results.

\begin{table*}[!t]
\centering
\small % or \normalsize for bigger text
\caption{Model comparisons (mean $\pm$ std over 5 folds). Higher is better for $r$ and $\rho$; lower is better for MAE and RMSE.}
\label{tab:metrics_summary}
\setlength{\tabcolsep}{8pt} 
\renewcommand{\arraystretch}{1.1} 
\begin{tabular}{lcccc}
\hline
\textbf{Model} & \textbf{Pearson $r$} & \textbf{Spearman $\rho$} & \textbf{MAE} & \textbf{RMSE} \\
\hline
\textbf{Neural Koopman Machine (NKM)}             & {\boldmath$0.5397 \pm 0.0237$} & {\boldmath$0.5024 \pm 0.0262$} & {\boldmath$3.8810 \pm 0.3710$} & $5.676 \pm 0.549$ \\
LSTM                                              & $0.5070 \pm 0.0309$ & $0.4856 \pm 0.0237$ & $4.0359 \pm 0.3380$ & {\boldmath$5.5955 \pm 0.5097$} \\
AttnRNN                                           & $0.5068 \pm 0.0240$ & $0.4810 \pm 0.0187$ & $4.0933 \pm 0.2877$ & $5.6918 \pm 0.4038$ \\
TimesNetTiny                                      & $0.4983 \pm 0.0255$ & $0.4729 \pm 0.0265$ & $5.1145 \pm 1.9693$ & $6.7831 \pm 2.0315$ \\
XGBoost                                           & $0.4926 \pm 0.0247$ & $0.4727 \pm 0.0121$ & $4.1345 \pm 0.2760$ & $5.6461 \pm 0.4888$ \\
RandomForest                                      & $0.4829 \pm 0.0259$ & $0.4741 \pm 0.0108$ & $4.1919 \pm 0.2766$ & $5.6748 \pm 0.4862$ \\
LinearRegression                                  & $0.4750 \pm 0.0377$ & $0.4786 \pm 0.0204$ & $4.2613 \pm 0.2695$ & $5.6804 \pm 0.4709$ \\
Neural ODE                                  & $0.3784 \pm 0.0501$ & $0.4099 \pm 0.0736$ & $4.667 \pm 0.220$ & $6.673 \pm 0.399$ \\
EDMD (RBF basis)                                  & $0.3706 \pm 0.0607$ & $0.4117 \pm 0.0344$ & $4.539 \pm 0.344$ & $6.413 \pm 0.642$ \\
\hline
\end{tabular}
%\footnotemark{Higher is better for $r$, $\rho$; lower for MAE/RMSE.}
\end{table*}

Our results show that NKM outperforms the baseline models across Pearson's $r$, Spearman's $\rho$, and MAE, with only its RMSE being slightly higher than that of LSTM and XGBoost. This suggests that, relative to existing longitudinal models, the NKM is effective at predicting multivariate future disease outcomes using past multimodal data.

\begin{comment}
    \subsection{Cross-Cohort Analysis and Limitations}

We conducted cross-cohort validation using the Open Access Series of Imaging Studies (OASIS-3) \cite{oasis3} to assess model generalizability. OASIS-3 provides compatible MRI volumetric and genetic features with cognitive assessments, although it lacks the CSF and PET biomarkers present in ADNI.

Cross-cohort analysis revealed substantial distributional shifts in target variables despite similar input feature distributions. CDR scores exhibited pronounced differences: OASIS-3 (mean: 0.47, median: 0) versus ADNI (mean: 2.08, median: 1), reflecting distinct clinical populations and disease severity profiles. This target-space mismatch fundamentally limits cross-cohort generalizability, as models trained on ADNI's progression dynamics cannot effectively predict outcomes in populations with markedly different baseline cognitive distributions. Exploratory analysis of cortical thickness feature importance patterns further confirmed notable spatial differences between the two cohorts, particularly in temporal and parietal regions (see Appendix Figure \ref{fig:cross_cohort_brain_comparison}).

These findings highlight a critical limitation in developing universal disease progression models: cohort-specific dynamics may not transfer across populations with different clinical characteristics, underscoring the need for careful consideration of population heterogeneity when deploying predictive models across diverse clinical settings.
\end{comment}

\subsection{Unveiling Disease-Progression Dynamics in the Koopman Latent Space}

The brain can be regarded as a dynamical system. One of the goals of the paper is to characterize the brain's dynamical properties as it begins to develop cognitive impairment and Alzheimer's disease. Therefore, it is important to unveil the disease trajectory over time.

NKM uncovers the disease dynamics in its latent space. More specifically, the latent Koopman representations not only encode distinctly clustered features unique to each group (CN vs. MCI vs. AD), but also depict transition dynamics between groups (e.g., transitions from MCI to AD) (see \textbf{Fig.} \ref{fig:koopman_interpretability}). The latent features show distinct paths for stable groups (CN $\rightarrow$ CN, MCI $\rightarrow$ MCI, and AD $\rightarrow$ AD) and conversion trajectories (CN $\rightarrow$ MCI and MCI $\rightarrow$ AD). The latent representations also highlight important states and pathways: a CN state, an AD state, an early conversion pathway, and a disease progression pathway (see blue-shaded areas, red-shaded areas, the directed green line, and the directed orange line, respectively in \textbf{Fig.} \ref{fig:koopman_interpretability}). Subjects entering a CN state whose directed pathways also fall in the CN state are likely to maintain normal cognitive function. Subjects entering an AD state whose directed pathways also fall in the AD state are likely to have AD. Subjects entering areas around the early conversion pathway and whose trajectories are along this pathway are likely to begin developing mild cognitive impairment. Subjects entering areas around the disease progression pathway and whose trajectories are along this pathway are likely to convert into AD or MCI due to AD. Further, the latent Koopman states not only uncover the disease dynamics but also appear to be associated with brain volume changes. Particularly, the Koopman representations most associated with hippocampal and ventricular volumes help reveal how brain volume changes may give rise to disease transitions (see the right panel of \textbf{Fig.} \textcolor{blue}{\ref{fig:koopman_interpretability}}).Although structural volumetric markers were provided as inputs, the NKM's latent trajectories showed particularly strong alignment with hippocampal and ventricular changes, suggesting these regions dominate the learned disease dynamics. % Together, the NKM captures nonlinear disease dynamics in an interpretable linear latent space.

\subsection{Ablation Studies} % and Assessing the Impact of $\beta$ Knowledge}

\begin{table*}[!t]
\centering
\small
\caption{Ablation study for Neural Koopman Machine (NKM) and its full setup (mean Pearson $r \pm$ std over 5 folds). T (True) denotes the presence of a specific NKM component, while F (False) denotes its absence. %Attn. = Attention and Reg. =  Regularization.
}
\label{tab:nkm_ablation}
\setlength{\tabcolsep}{8pt}
\renewcommand{\arraystretch}{1.1}
\begin{tabular}{lccccc}
\hline
\textbf{Setup} & \textbf{Pearson $r$} & \textbf{Control} & \textbf{Temporal Attn.} & \textbf{Feature Attn.} & \textbf{Spectral Reg.} \\
\hline
\textbf{Full Neural Koopman Machine} & \boldmath$0.5397 \pm 0.0237$ & T & T & T & T \\
No control vector & $0.4179 \pm 0.0148$ & F & T & T & T \\
No temporal attention & $0.4908 \pm 0.0375$ & T & F & T & T \\
No feature attention & $0.5252 \pm 0.0226$ & T & T & F & T \\
No spectral regularization & $0.5101 \pm 0.0278$ & T & T & T & F \\
\hline
%\footnote{Higher is better for $r$, $\rho$; lower for MAE/RMSE.}
\end{tabular}
%\footnotemark{Higher is better for $r$, $\rho$; lower for MAE/RMSE.}
\end{table*}

To evaluate the contributions of individual components in our Neural Koopman Model (NKM), we performed a comprehensive ablation study using subject-level five-fold cross-validation on the ADNI dataset. We measured the performance using the mean Pearson correlation coefficient $r$ across the three cognitive targets (MMSE, CDRSB, and ADAS13 scores), with results reported as mean $\pm$ standard deviation over folds. The full NKM integrates a feature-aware encoder for grouped inputs (\textcolor{black}{genetic, proteomic, PET imaging, MRI, and demographic features}), a hierarchical attention control mechanism (combining temporal and feature attentions), and spectral regularization to enforce contractive dynamics via a bounded spectral radius $\rho = 0.95$. Ablations systematically removed one component at a time while retaining the others.

\textbf{Table} \textcolor{blue}{\ref{tab:nkm_ablation}} summarizes the results for the ablation studies. The full model achieves $r = 0.5397 \pm 0.0237$, establishing a strong baseline for longitudinal cognitive prediction. Removing the control vector that injects subject-specific latent adjustments via hierarchical attention results in the largest performance drop $r = 0.4179 \pm 0.0148$, ($\Delta r = -0.1218$), underscoring its critical role in capturing inter-subject variability and non-autonomous dynamics in neurodegenerative progression. Without temporal attention, performance degrades to $r = 0.4908 \pm 0.0375$ ($\Delta r = -0.0489$), highlighting the importance of modeling temporal dependencies across the three-visit window. Feature attention ablation results in a milder drop $r = 0.5252 \pm 0.0226$ ($\Delta r = -0.0145$), as the grouped encoder already provides some disentanglement, but feature attention still aids in prioritizing salient biomarkers (e.g., genetics over demographics). Finally, disabling spectral regularization that penalizes the Koopman operator's spectral norm beyond $\rho$ via $\eta (\max(0, \|\mathbf{K}\|_2^2 - \rho^2)^2$ with $\eta=0.01$ results in $r = 0.5101 \pm 0.0278$ ($\Delta r = -0.0296$), emphasizing \textcolor{black}{the role} of stable, contractive evolution for multi-step rollouts.

These ablation studies confirm the synergistic contributions of NKM's design: the control vector and temporal modeling drive the majority of gains ($\sim$29\% and $\sim$10\% relative improvements, respectively), while feature attention and spectral constraints enhance robustness. 
In summary, our ablation studies demonstrate NKM's modularity and the pivotal role of \textcolor{black}{$\alpha \times \beta$-knowledge-assisted} Koopman dynamics in modeling disease progression.

\subsection{Biological and Neuroanatomical Substrates of Longitudinal Cognitive Decline via Interpretability Analysis}
Here, we examine the importance and consistency of the biomarkers identified by NKM for longitudinal personalized AD forecasting. 
First, NKM uncovers brain signatures respectively predictive of three cognitive scores over time (see \textbf{Fig.} \ref{fig:neural_koopman_analysis}). Our results indicate that longitudinal AD score prediction involves distributed cortical regions, including frontal, parietal, temporal, and occipital areas. Consistently, regions within the default mode network (DMN), visual periphery, limbic system, and temporal–parietal areas were identified as key neural biomarkers across all three scores. These findings not only align with evidence that widespread cortical changes in AD are associated with global cognitive decline \cite{bakkour_effects_2013,pereira_disrupted_2016}, but also highlight that the patterns captured by these distributed changes provide valuable information for forecasting AD progression over time. The limbic regions appear informative for predicting both MMSE and CDRSB scores, consistent with functional decline and limbic vulnerability in early AD \cite{whitwell_imaging_2018}. In contrast, the DMN \cite{buckner_molecular_2005,jones_age-related_2011}, particularly \textit{DefaultA} 
\cite{rieck2021dataset},
consisting of parts of BA39 and BA10, is especially predictive of the ADAS13 score over time. This both aligns with previous findings on the association between BA39 \cite{howlett2013clusterin}, BA10\cite{wu2011altered}, and AD, and underscores its relevance for forecasting longidutinal cognitive decline. %\cite{sperling_amyloid_2009,sheline_amyloid_2010}.
%Parietal and posterior temporal regions, linked to memory and language impairment, are important for longitudinal ADAS13 prediction; this aligns with the established AD pathology gradient \cite{scholl_pet_2016}.

Second, NKM identified APOE4 as a useful AD biomarker. While AD is recognized as a polygenic disease~\cite{harrisonPolygenicScoresPrecision2020}, the APOE4 allele remains the most well-established genetic risk factor for late-onset AD~\cite{bellenguezGeneticsAlzheimersDisease2020a}. Third, our results suggest that CSF features, particularly p-tau, are useful for predicting longitudinal MMSE, CDRSB, and ADAS13 scores, consistent with the central role of tau pathology in AD progression \cite{zhang_recent_2024}. Fourth, PET images, particularly FDG-PET, provide predictive value for AD forecasting, as FDG-PET measures glucose metabolism in the brain, which is typically reduced in AD \cite{chetelat2020amyloid}. Fifth, our results suggest that ICV may be valuable for predicting AD. Although the relationship between ICV and AD progression is not yet fully established, individuals with larger brains are thought to have greater ``brain reserve'', potentially allowing them to tolerate more pathology before exhibiting cognitive symptoms \cite{van2018intracranial}. Finally, we find that the biomarkers with both high feature importance scores and selection stability (i.e., consistently ranked among the top 10 features across 50 runs) are largely MRI-derived. This is likely attributable to the substantially greater availability of longitudinal MRI data in our dataset (each subject having at least three MRI scans), which enables the model to capture temporal trajectories of disease progression reflected in the repeatedly measured scans.
In contrast, the availability of repeated measurements varies significantly by modality. While ICV data is available longitudinally for nearly all subjects (923 of 949), repeated measurements are sparse for \textbf{PET (214)} and \textbf{CSF (107)}, and absent for DNA. However, the framework presented in this paper, along with the division of $\alpha$- and $\beta$-knowledge, effectively manages this disparity, ensuring that the dense longitudinal data from MRI and ICV provide the essential information for modeling AD progression.

\section{Discussion}
The structural pipeline of the NKM model that we presented in this paper, and the division of $\alpha$- and $\beta$-knowledge, provides us with a robust machine learning framework for longitudinal assessment of multiple disease-related outcomes using multimodal data.

The modular design of the NKM allows the model to be run on datasets that do not include all modalities considered in the current study or other studies -- a common challenge in AD research and datasets. Another design decision that invites discussion is using a one-hot encoder rather than a categorical embedding for demographic features. This is a common trade-off in machine learning\cite{hancock2020survey}. Within the NKM, a one-hot encoding is used so that these categorical groups are fed into dedicated Multi-Layer Perceptrons (MLPs) within the Feature-Aware Encoder we proposed. One-hot encoding offers interpretability and simplicity, treating demographic categories as mutually orthogonal without assuming relationships. Since demographic data typically have low cardinality, this is a suitable mathematical procedure. Hence, we avoid mathematical overparameterization, and, consequently, the integration 
of demographics with the models' group-aware design is more seamless. This design choice offers two key advantages: (1) it maintains demographic feature independence without introducing representational bias, and (2) it ensures direct compatibility with the Koopman operator's state-space formulation. The alternative would be to use the categorical embeddings that learn dense vectors per category. This may capture more of the semantic similarities for better in-model interaction, and, within our use case, this may improve the gating mechanism, as the signals coming from the categories will be more prominent/obscure. However, given the low cardinality of our demographic data, categorical embeddings pose the risk of overfitting and training overhead, while reducing the interpretability of the model architecture \cite{Guo2016EntityEO,gorishniy2023revisitingdeeplearningmodels}. Therefore, we use one-hot encoding to handle demographic information, \textcolor{black}{which lacks complex semantics, and rely on the encoder's MLPs to provide equivalent compression.}

\section{Final remarks}
Our work has a few limitations. First, due partly to the data acquisition, we considered regularly sampled time series. Future work should evaluate the efficacy of NKM on irregularly sampled data. Second, our main focus here is to identify interpretable multimodal biomarkers predictive of future disease progression; independent work should evaluate whether these biomarkers are causally linked to AD progression. Third, while one can apply NKM to study other diseases, the trained architectures from AD data cannot yet be directly transferred to study datasets of other diseases. Future work can explore the potential of NKM for forecasting other disease trajectories. Lastly, recent studies have suggested promising blood-based biomarkers (e.g., p-tau181/217, neurofilament light chain (NfL), and glial fibrillary acidic protein (GFAP)) \cite{grande_blood-based_2025}; while not included in our study, they offer new avenues to improve NKM by incorporating additional biomarkers.

\section{Acknowledgements}
G. Hrusanov provided all the mathematical formulations and the numerical results. 

This work was jointly supported by F. Hoffmann - La Roche Ltd. and the Swiss National Science Foundation (SNSF) under Grants \texttt{CR00I5-235987} and \texttt{3200-0\_239967}.
%Addresses of Kasia and Carsten
%CM: F. Hoffmann-La Roche Ltd., Roche Information Solutions, Basel, Switzerland

%KAK: Roche Diagnostics International Ltd., Rotkreuz, Switzerland
%Disclosure

K. Koscielska is an employee of Roche Diagnostics International Ltd., Rotkreuz, 6343, Switzerland and C. Magnus is an employee of F. Hoffmann - La Roche Ltd., Roche Information Solutions, Basel, 4058, Switzerland. K. Koscielska and C. Magnus hold shares in F. Hoffmann-La Roche Ltd.

%The ADNI data used in this article are from the Alzheimer's Disease Neuroimaging Initiative (ADNI) database (\url{adni.loni.usc.edu}). As such, the investigators within the ADNI contributed to the design and implementation of ADNI and/or provided data but did not participate in the analysis or writing of this paper. A complete list of ADNI investigators is available at: 
%\url{http://adni.loni.usc.edu/wp-content/uploads/how_to_apply/ADNI_Acknowledgement_List.pdf.} 

%The OASIS data used were provided by the Open Access Series of Imaging Studies (OASIS). OASIS Longitudinal: Principal Investigators: D. Marcus, R. Buckner, J. Csernansky, J. Morris (P50 AG05681, P01 AG03991, P01 AG026276, R01 AG021910, P20 MH071616, U24 RR021382)

Data collection and sharing for this project was funded by the Alzheimer's Disease Neuroimaging Initiative
(ADNI) (National Institutes of Health Grant U01 AG024904) and DOD ADNI (Department of Defense award
number W81XWH-12-2-0012). ADNI is funded by the National Institute on Aging, the National Institute of
Biomedical Imaging and Bioengineering, and through generous contributions from the following: AbbVie,
Alzheimer's Association; Alzheimer's Drug Discovery Foundation; Araclon Biotech; BioClinica, Inc.; Biogen;
Bristol-Myers Squibb Company; CereSpir, Inc.; Cogstate; Eisai Inc.; Elan Pharmaceuticals, Inc.; Eli Lilly and
Company; EuroImmun; F. Hoffmann-La Roche Ltd and its affiliated company Genentech, Inc.; Fujirebio; GE
Healthcare; IXICO Ltd.; Janssen Alzheimer Immunotherapy Research \& Development, LLC.; Johnson \&
Johnson Pharmaceutical Research \& Development LLC.; Lumosity; Lundbeck; Merck \& Co., Inc.; Meso
Scale Diagnostics, LLC.; NeuroRx Research; Neurotrack Technologies; Novartis Pharmaceuticals
Corporation; Pfizer Inc.; Piramal Imaging; Servier; Takeda Pharmaceutical Company; and Transition
Therapeutics. The Canadian Institutes of Health Research is providing funds to support ADNI clinical sites
in Canada. Private sector contributions are facilitated by the Foundation for the National Institutes of Health
\url{www.fnih.org}. The grantee organization is the Northern California Institute for Research and Education,
and the study is coordinated by the Alzheimer's Therapeutic Research Institute at the University of Southern
California. ADNI data are disseminated by the Laboratory for Neuro Imaging at the University of Southern
California.

%\newpage

%{\appendices
%\section*{Proof of the First Zonklar Equation}
%Appendix one text goes here.
% You can choose not to have a title for an appendix if you want by leaving the argument blank
%\section*{Proof of the Second Zonklar Equation}
%Appendix two text goes here.}

\bibliographystyle{IEEEtran}
\bibliography{all_refs.bib}

% Generated by IEEEtran.bst, version: 1.14 (2015/08/26)
\begin{thebibliography}{10}
\providecommand{\url}[1]{#1}
\csname url@samestyle\endcsname
\providecommand{\newblock}{\relax}
\providecommand{\bibinfo}[2]{#2}
\providecommand{\BIBentrySTDinterwordspacing}{\spaceskip=0pt\relax}
\providecommand{\BIBentryALTinterwordstretchfactor}{4}
\providecommand{\BIBentryALTinterwordspacing}{\spaceskip=\fontdimen2\font plus
\BIBentryALTinterwordstretchfactor\fontdimen3\font minus \fontdimen4\font\relax}
\providecommand{\BIBforeignlanguage}[2]{{%
\expandafter\ifx\csname l@#1\endcsname\relax
\typeout{** WARNING: IEEEtran.bst: No hyphenation pattern has been}%
\typeout{** loaded for the language `#1'. Using the pattern for}%
\typeout{** the default language instead.}%
\else
\language=\csname l@#1\endcsname
\fi
#2}}
\providecommand{\BIBdecl}{\relax}
\BIBdecl

\bibitem{ADI_Dementia_Statistics_2023}
\BIBentryALTinterwordspacing
``Dementia statistics,'' 2023, accessed: 2025-09-21. [Online]. Available: \url{https://www.alzint.org/about/dementia-facts-figures/dementia-statistics/}
\BIBentrySTDinterwordspacing

\bibitem{international_world_2023}
\BIBentryALTinterwordspacing
{World Alzheimer Report}, ``\BIBforeignlanguage{en}{World {{Alzheimer}} {Report} 2023: {Reducing} {dementia} {risk}: {Never} too early, never too late},'' 2023. [Online]. Available: \url{https://www.alzint.org/resource/world-alzheimer-report-2023/}
\BIBentrySTDinterwordspacing

\bibitem{nandi_global_2022}
A.~Nandi, N.~Counts, S.~Chen, B.~Seligman, D.~Tortorice, D.~Vigo, and D.~E. Bloom, ``\BIBforeignlanguage{English}{Global and regional projections of the economic burden of {{Alzheimer}}'s disease and related dementias from 2019 to 2050: {A} value of statistical life approach},'' \emph{\BIBforeignlanguage{English}{eClinicalMedicine}}, vol.~51, 2022.

\bibitem{kahle-wrobleski_assessing_2017}
K.~Kahle-Wrobleski, W.~Ye, D.~Henley, A.~M. Hake, E.~Siemers, Y.-F. Chen, and H.~Liu-Seifert, ``\BIBforeignlanguage{eng}{Assessing quality of life in {{Alzheimer}}'s disease: {Implications} for clinical trials},'' \emph{\BIBforeignlanguage{eng}{Alzheimer's \& Dementia}}, vol.~6, pp. 82--90, 2017.

\bibitem{li_deep_2014}
R.~Li, W.~Zhang, H.-I. Suk, L.~Wang, J.~Li, D.~Shen, and S.~Ji, ``Deep learning based imaging data completion for improved brain disease diagnosis,'' in \emph{Medical Image Computing and Computer-Assisted Intervention -- MICCAI 2014}, P.~Golland, N.~Hata, C.~Barillot, J.~Hornegger, and R.~Howe, Eds.\hskip 1em plus 0.5em minus 0.4em\relax Cham: Springer International Publishing, 2014, pp. 305--312.

\bibitem{wen_convolutional_2020}
J.~Wen, E.~Thibeau-Sutre, M.~Diaz-Melo, J.~Samper-González, A.~Routier, S.~Bottani, D.~Dormont, S.~Durrleman, N.~Burgos, and O.~Colliot, ``Convolutional neural networks for classification of {{Alzheimer}}'s disease: {Overview} and reproducible evaluation,'' \emph{Medical Image Analysis}, vol.~63, p. 101694, 2020.

\bibitem{morris_clinical_1997}
J.~C. Morris, ``Clinical {dementia} {rating}: {A} {reliable} and {valid} {diagnostic} and {staging} {measure} for {dementia} of the {Alzheimer} {type},'' \emph{International Psychogeriatrics}, vol.~9, pp. 173--176, 1997.

\bibitem{jack_nia-aa_2018}
C.~R. Jack, D.~A. Bennett, K.~Blennow, M.~C. Carrillo, B.~Dunn, S.~B. Haeberlein, D.~M. Holtzman, W.~Jagust, F.~Jessen, J.~Karlawish, E.~Liu, J.~L. Molinuevo, T.~Montine, C.~Phelps, K.~P. Rankin, C.~C. Rowe, P.~Scheltens, E.~Siemers, H.~M. Snyder, R.~Sperling, and {Contributors}, ``\BIBforeignlanguage{eng}{{NIA-AA} {research} {framework}: {Toward} a biological definition of {Alzheimer}'s disease},'' \emph{\BIBforeignlanguage{eng}{Alzheimer's \& Dementia}}, vol.~14, no.~4, pp. 535--562, 2018.

\bibitem{livingston_dementia_2020}
G.~Livingston, J.~Huntley, A.~Sommerlad, D.~Ames, C.~Ballard, S.~Banerjee, C.~Brayne, A.~Burns, J.~Cohen-Mansfield, C.~Cooper, S.~G. Costafreda, A.~Dias, N.~Fox, L.~N. Gitlin, R.~Howard, H.~C. Kales, M.~Kivimäki, E.~B. Larson, A.~Ogunniyi, V.~Orgeta, K.~Ritchie, K.~Rockwood, E.~L. Sampson, Q.~Samus, L.~S. Schneider, G.~Selbæk, L.~Teri, and N.~Mukadam, ``\BIBforeignlanguage{English}{Dementia prevention, intervention, and care: 2020 report of the {Lancet Commission}},'' \emph{\BIBforeignlanguage{English}{The Lancet}}, vol. 396, no. 10248, pp. 413--446, 2020.

\bibitem{vogel_four_2021}
J.~W. Vogel, A.~L. Young, N.~P. Oxtoby, R.~Smith, R.~Ossenkoppele, O.~T. Strandberg, R.~La~Joie, L.~M. Aksman, M.~J. Grothe, Y.~Iturria-Medina, M.~J. Pontecorvo, M.~D. Devous, G.~D. Rabinovici, D.~C. Alexander, C.~H. Lyoo, A.~C. Evans, and O.~Hansson, ``\BIBforeignlanguage{en}{Four distinct trajectories of tau deposition identified in {{Alzheimer}}’s disease},'' \emph{\BIBforeignlanguage{en}{Nature Medicine}}, vol.~27, no.~5, pp. 871--881, 2021.

\bibitem{scholl_distinct_2017}
M.~Schöll, R.~Ossenkoppele, O.~Strandberg, S.~Palmqvist, {Swedish BioFINDER study}, J.~Jögi, T.~Ohlsson, R.~Smith, and O.~Hansson, ``\BIBforeignlanguage{eng}{Distinct {18F}-{AV}-1451 tau {PET} retention patterns in early- and late-onset {Alzheimer}'s disease},'' \emph{\BIBforeignlanguage{eng}{Brain}}, vol. 140, no.~9, pp. 2286--2294, 2017.

\bibitem{diaz_optimus_2025}
C.~S. Diaz, D.~T. Vu, J.~Bodelet, D.~C. Can, G.~Blanc, H.~Jiang, L.~Yao, G.~Pantaleo, and O.~Y. Chen, ``Optimus: Predicting multivariate outcomes in {Alzheimer's} disease using multi-modal data amidst missing values,'' \emph{IEEE Transactions on Biomedical Engineering}, 2025, doi:10.1109/TBME.2025.3608207.

\bibitem{koopman_hamiltonian_1931}
B.~O. Koopman, ``Hamiltonian systems and transformation in {H}ilbert space,'' \emph{Proceedings of the National Academy of Sciences}, vol.~17, no.~5, pp. 315--318, 1931.

\bibitem{brunton_modern_2022}
S.~L. Brunton, M.~Budi\v{s}i\'{c}, E.~Kaiser, and J.~N. Kutz, ``Modern {K}oopman theory for dynamical systems,'' \emph{SIAM Review}, vol.~64, no.~2, pp. 229--340, 2022.

\bibitem{young_data-driven_2014}
A.~L. Young, N.~P. Oxtoby, P.~Daga, D.~M. Cash, N.~C. Fox, S.~Ourselin, J.~M. Schott, D.~C. Alexander, and {the Alzheimer's Disease Neuroimaging Initiative}, ``A data-driven model of biomarker changes in sporadic {A}lzheimer's disease,'' \emph{Brain}, vol. 137, no.~9, pp. 2564--2577, 2014.

\bibitem{oxtoby_data-driven_2017}
N.~P. Oxtoby, S.~Garbarino, N.~C. Firth, J.~D. Warren, J.~M. Schott, D.~C. Alexander, and {Alzheimer’s Disease Neuroimaging Initiative}, ``\BIBforeignlanguage{eng}{Data-{driven} {sequence} of {changes} to {anatomical} {brain} {connectivity} in {sporadic} {Alzheimer}'s {disease}},'' \emph{\BIBforeignlanguage{eng}{Frontiers in Neurology}}, vol.~8, p. 580, 2017.

\bibitem{proctor_generalizing_2018}
J.~L. Proctor, S.~L. Brunton, and J.~N. Kutz, ``Generalizing {K}oopman theory to allow for inputs and control,'' \emph{SIAM Journal on Applied Dynamical Systems}, vol.~17, no.~1, pp. 909--930, 2018.

\bibitem{bhagwat_modeling_2018}
N.~Bhagwat, J.~D. Viviano, A.~N. Voineskos, M.~M. Chakravarty, and {Alzheimer’s Disease Neuroimaging Initiative}, ``\BIBforeignlanguage{eng}{Modeling and prediction of clinical symptom trajectories in {{Alzheimer}}'s disease using longitudinal data},'' \emph{\BIBforeignlanguage{eng}{PLoS Computational Biology}}, vol.~14, no.~9, p. e1006376, 2018.

\bibitem{lee_predicting_2019}
G.~Lee, K.~Nho, B.~Kang, K.-A. Sohn, D.~Kim, and {for Alzheimer’s Disease Neuroimaging Initiative}, ``\BIBforeignlanguage{eng}{Predicting {{Alzheimer}}'s disease progression using multi-modal deep learning approach},'' \emph{\BIBforeignlanguage{eng}{Scientific Reports}}, vol.~9, no.~1, p. 1952, 2019.

\bibitem{murray_mathematical_2002}
J.~D. Murray, Ed., \emph{Mathematical Biology: I. An Introduction}, 3rd~ed., ser. Interdisciplinary {Applied} {Mathematics}.\hskip 1em plus 0.5em minus 0.4em\relax New York, NY: Springer, 2002.

\bibitem{ghahramani_tutorial_nodate}
Z.~Ghahramani, ``\BIBforeignlanguage{en}{A {tutorial} on {Gaussian} {processes} (or why {I} don’t use {SVMs})},'' https://danielcaraway.github.io/assets/reference/lect1gp.pdf.

\bibitem{kalman_new_1960}
R.~E. Kalman, ``A {new} {approach} to {linear} {filtering} and {prediction} {problems},'' \emph{Journal of Basic Engineering}, vol.~82, no.~1, pp. 35--45, 1960.

\bibitem{chen_neural_2019}
R.~T.~Q. Chen, Y.~Rubanova, J.~Bettencourt, and D.~K. Duvenaud, ``Neural ordinary differential equations,'' in \emph{32nd Conference on Neural Information Processing Systems (NeurIPS 2018)}, S.~Bengio, H.~Wallach, H.~Larochelle, K.~Grauman, N.~Cesa-Bianchi, and R.~Garnett, Eds., vol.~31, Montréal, Canada, Dec 2018.

\bibitem{rubanova_latent_2019}
Y.~Rubanova, R.~T. Chen, and D.~K. Duvenaud, ``Latent {ODEs} for irregularly-sampled time series,'' in \emph{33rd Conference on Neural Information Processing Systems (NeurIPS 2019)}, vol. 478, Montréal, Canada, Dec 2019, pp. 5320--5330.

\bibitem{mezic_spectral_2005}
I.~Mezić, ``\BIBforeignlanguage{en}{Spectral {properties} of {dynamical} {systems}, {model} {reduction} and {decompositions}},'' \emph{\BIBforeignlanguage{en}{Nonlinear Dynamics}}, vol.~41, pp. 309--325, 2005.

\bibitem{schmid_dynamic_2010}
P.~J. Schmid, ``\BIBforeignlanguage{en}{Dynamic mode decomposition of numerical and experimental data},'' \emph{\BIBforeignlanguage{en}{Journal of Fluid Mechanics}}, vol. 656, pp. 5--28, 2010.

\bibitem{williams_data-driven_2015}
M.~O. Williams, I.~G. Kevrekidis, and C.~W. Rowley, ``A {data}-{driven} {approximation} of the {Koopman} {operator}: {Extending} {dynamic} {mode} {decomposition},'' \emph{Journal of Nonlinear Science}, vol.~25, pp. 1307--1346, 2015.

\bibitem{lusch_deep_2018}
B.~Lusch, J.~N. Kutz, and S.~L. Brunton, ``\BIBforeignlanguage{en}{Deep learning for universal linear embeddings of nonlinear dynamics},'' \emph{\BIBforeignlanguage{en}{Nature Communications}}, vol.~9, p. 4950, 2018.

\bibitem{yeung_learning_2017}
E.~Yeung, S.~Kundu, and N.~Hodas, ``Learning deep neural network representations for {K}oopman operators of nonlinear dynamical systems,'' in \emph{2019 American Control Conference (ACC)}, Philadelphia, PA, USA, Jul 2019, pp. 4832--4839.

\bibitem{mehdipour_ghazi_training_2019}
M.~Mehdipour~Ghazi, M.~Nielsen, A.~Pai, M.~J. Cardoso, M.~Modat, S.~Ourselin, L.~Sørensen, and {Alzheimer’s Disease Neuroimaging Initiative}, ``\BIBforeignlanguage{eng}{Training recurrent neural networks robust to incomplete data: {Application} to {Alzheimer}'s disease progression modeling},'' \emph{\BIBforeignlanguage{eng}{Medical Image Analysis}}, vol.~53, pp. 39--46, 2019.

\bibitem{baytas_patient_2017}
I.~M. Baytas, C.~Xiao, X.~Zhang, F.~Wang, A.~K. Jain, and J.~Zhou, ``Patient {subtyping} via {time}-{aware} {LSTM} {networks},'' in \emph{Proceedings of the 23rd {{acm}} {SIGKDD} {International} {Conference} on {Knowledge} {Discovery} and {Data} {Mining}}, ser. {KDD} '17.\hskip 1em plus 0.5em minus 0.4em\relax New York, NY, USA: Association for Computing Machinery, Aug. 2017, pp. 65--74.

\bibitem{che2018recurrent}
Z.~Che, S.~Purushotham, K.~Cho, D.~Sontag, and Y.~Liu, ``Recurrent neural networks for multivariate time series with missing values,'' \emph{Scientific Reports}, vol.~8, no. 6085, 2018.

\bibitem{choi_retain_2016}
E.~Choi, M.~T. Bahadori, J.~A. Kulas, A.~Schuetz, W.~F. Stewart, and J.~Sun, ``{RETAIN}: {A}n interpretable predictive model for healthcare using reverse time attention mechanism,'' in \emph{30th Conference on Neural Information Processing Systems (NeurIPS 2016)}, Barcelona, Spain, Dec. 2016, pp. 3512--3520.

\bibitem{rajkomar_scalable_2018}
A.~Rajkomar, E.~Oren, K.~Chen, A.~M. Dai, N.~Hajaj, M.~Hardt, P.~J. Liu, X.~Liu, J.~Marcus, M.~Sun, P.~Sundberg, H.~Yee, K.~Zhang, Y.~Zhang, G.~Flores, G.~E. Duggan, J.~Irvine, Q.~Le, K.~Litsch, A.~Mossin, J.~Tansuwan, D.~Wang, J.~Wexler, J.~Wilson, D.~Ludwig, S.~L. Volchenboum, K.~Chou, M.~Pearson, S.~Madabushi, N.~H. Shah, A.~J. Butte, M.~D. Howell, C.~Cui, G.~S. Corrado, and J.~Dean, ``\BIBforeignlanguage{en}{Scalable and accurate deep learning with electronic health records},'' \emph{\BIBforeignlanguage{en}{npj Digital Medicine}}, no.~1, p.~18, 2018.

\bibitem{vaswani_attention_2017-1}
A.~Vaswani, N.~Shazeer, N.~Parmar, J.~Uszkoreit, L.~Jones, A.~Gomez, L.~Kaiser, and I.~Polosukhin, ``Attention is all you need,'' in \emph{31st Conference on Neural Information Processing Systems (NIPS 2017)}, vol.~30, Long Beach, CA, USA, Dec. 2017.

\bibitem{raissi_physics-informed_2019}
M.~Raissi, P.~Perdikaris, and G.~E. Karniadakis, ``Physics-informed neural networks: {A} deep learning framework for solving forward and inverse problems involving nonlinear partial differential equations,'' \emph{Journal of Computational Physics}, vol. 378, pp. 686--707, 2019.

\bibitem{kitaev2020reformerefficienttransformer}
N.~Kitaev, Łukasz Kaiser, and A.~Levskaya, ``Reformer: The efficient transformer,'' in \emph{The Eighth International Conference on Learning Representations (ICLR)}, Addis Ababa, Ethiopia, Apr. 2020, \url{https://openreview.net/pdf?id=rkgNKkHtvB}.

\bibitem{bai_empirical_2018}
S.~Bai, J.~Z. Kolter, and V.~Koltun, ``An empirical evaluation of generic convolutional and recurrent networks for sequence modeling,'' \emph{arXiv preprint arXiv:1803.01271}, 2018.

\bibitem{liu_understanding_2023}
L.~Liu, X.~Liu, J.~Gao, W.~Chen, and J.~Han, ``Understanding the difficulty of training transformers,'' in \emph{Proceedings of the 2020 Conference on Empirical Methods in Natural Language Processing (EMNLP)}.\hskip 1em plus 0.5em minus 0.4em\relax Online conference: Association for Computational Linguistics, Nov. 2020.

\bibitem{luo_hitanet_2020}
J.~Luo, M.~Ye, C.~Xiao, and F.~Ma, ``{HiTANet}: {Hierarchical} {time}-{aware} {attention} {networks} for {risk} {prediction} on {electronic} {health} {records},'' in \emph{KDD '20: Proceedings of the 26th ACM SIGKDD International Conference on Knowledge Discovery \& Data Mining}.\hskip 1em plus 0.5em minus 0.4em\relax New York, NY, USA: Association for Computing Machinery, Aug. 2020, pp. 647--656.

\bibitem{li_behrt_2020}
Y.~Li, S.~Rao, J.~R.~A. Solares, A.~Hassaine, R.~Ramakrishnan, D.~Canoy, Y.~Zhu, K.~Rahimi, and G.~Salimi-Khorshidi, ``\BIBforeignlanguage{en}{{BEHRT}: {Transformer} for {electronic} {health} {records}},'' \emph{\BIBforeignlanguage{en}{Scientific Reports}}, vol.~10, p. 7155, 2020.

\bibitem{ma_dipole_2017}
F.~Ma, R.~Chitta, J.~Zhou, Q.~You, T.~Sun, and J.~Gao, ``Dipole: {Diagnosis} {prediction} in {healthcare} via {attention}-based {bidirectional} {recurrent} {neural} {networks},'' in \emph{KDD '17: Proceedings of the 23rd ACM SIGKDD International Conference on Knowledge Discovery and Data Mining}.\hskip 1em plus 0.5em minus 0.4em\relax New York, NY, USA: Association for Computing Machinery, Aug. 2017, pp. 1903--1911.

\bibitem{Mezic2013KoopmanReview}
I.~Mezi{\'c}, ``Analysis of fluid flows via spectral properties of the koopman operator,'' vol.~45, pp. 357--378, 2013.

\bibitem{korda2018convergence}
M.~Korda and I.~Mezić, ``On convergence of extended dynamic mode decomposition to the koopman operator,'' \emph{Journal of Nonlinear Science}, vol.~28, pp. 687--710, 2018.

\bibitem{HornJohnson2012MatrixAnalysis}
R.~A. Horn and C.~R. Johnson, \emph{Matrix Analysis}, 2nd~ed.\hskip 1em plus 0.5em minus 0.4em\relax Cambridge University Press, 2012.

\bibitem{Chen1999LinearSystems}
C.-T. Chen, \emph{Linear System Theory and Design}, 3rd~ed.\hskip 1em plus 0.5em minus 0.4em\relax Oxford University Press, 1999.

\bibitem{weiner_alzheimers_2012}
M.~W. Weiner, D.~P. Veitch, P.~S. Aisen, L.~A. Beckett, N.~J. Cairns, R.~C. Green, D.~Harvey, C.~R. Jack, W.~Jagust, E.~Liu, J.~C. Morris, R.~C. Petersen, A.~J. Saykin, M.~E. Schmidt, L.~Shaw, J.~A. Siuciak, H.~Soares, A.~W. Toga, J.~Q. Trojanowski, and {Alzheimer’s Disease Neuroimaging Initiative}, ``\BIBforeignlanguage{eng}{The {{Alzheimer}}'s {Disease} {Neuroimaging} {Initiative}: A review of papers published since its inception},'' \emph{\BIBforeignlanguage{eng}{Alzheimer's \& Dementia}}, vol.~8, no. 1 Suppl, pp. S1--68, 2012.

\bibitem{wu_timesnet_2023}
H.~Wu, T.~Hu, Y.~Liu, H.~Zhou, J.~Wang, and M.~Long, ``{TimesNet}: Temporal 2d-variation modeling for general time series analysis,'' in \emph{The Eleventh International Conference on Learning Representations (ICLR)}, Kigali, Rwanda, May 2023, \url{https://openreview.net/pdf?id=ju_Uqw384Oq}.

\bibitem{bakkour_effects_2013}
A.~Bakkour, J.~C. Morris, D.~A. Wolk, and B.~C. Dickerson, ``The effects of aging and {{Alzheimer}}'s disease on cerebral cortical anatomy: {Specificity} and differential relationships with cognition,'' \emph{NeuroImage}, vol.~76, pp. 332--344, 2013.

\bibitem{pereira_disrupted_2016}
J.~B. Pereira, M.~Mijalkov, E.~Kakaei, P.~Mecocci, B.~Vellas, M.~Tsolaki, I.~Kłoszewska, H.~Soininen, C.~Spenger, S.~Lovestone, A.~Simmons, L.-O. Wahlund, G.~Volpe, E.~Westman, {AddNeuroMed consortium}, and {the Alzheimer's Disease Neuroimaging Initiative}, ``Disrupted {network} {topology} in {patients} with {stable} and {progressive} {mild} {cognitive} {impairment} and {Alzheimer}'s {disease},'' \emph{Cerebral Cortex}, vol.~26, no.~8, pp. 3476--3493, 2016.

\bibitem{whitwell_imaging_2018}
J.~L. Whitwell, J.~Graff-Radford, N.~Tosakulwong, S.~D. Weigand, M.~M. Machulda, M.~L. Senjem, A.~J. Spychalla, P.~Vemuri, D.~T. Jones, D.~A. Drubach, D.~S. Knopman, B.~F. Boeve, N.~Ertekin-Taner, R.~C. Petersen, V.~J. Lowe, C.~R. Jack~Jr., and K.~A. Josephs, ``\BIBforeignlanguage{en}{Imaging correlations of tau, amyloid, metabolism, and atrophy in typical and atypical {{Alzheimer}}'s disease},'' \emph{\BIBforeignlanguage{en}{Alzheimer's \& Dementia}}, vol.~14, no.~8, pp. 1005--1014, 2018.

\bibitem{buckner_molecular_2005}
R.~L. Buckner, A.~Z. Snyder, B.~J. Shannon, G.~LaRossa, R.~Sachs, A.~F. Fotenos, Y.~I. Sheline, W.~E. Klunk, C.~A. Mathis, J.~C. Morris, and M.~A. Mintun, ``\BIBforeignlanguage{en}{Molecular, {structural}, and {functional} {characterization} of {Alzheimer}'s {disease}: {Evidence} for a {relationship} between {default} {activity}, {amyloid}, and {memory}},'' \emph{\BIBforeignlanguage{en}{Journal of Neuroscience}}, vol.~25, no.~34, pp. 7709--7717, 2005.

\bibitem{jones_age-related_2011}
D.~Jones, M.~Machulda, P.~Vemuri, E.~McDade, G.~Zeng, M.~Senjem, J.~Gunter, S.~Przybelski, R.~Avula, D.~Knopman, B.~Boeve, R.~Petersen, and C.~Jack, ``Age-related changes in the default mode network are more advanced in {{Alzheimer}} disease,'' \emph{Neurology}, vol.~77, no.~16, pp. 1524--1531, 2011.

\bibitem{rieck2021dataset}
J.~R. Rieck, G.~Baracchini, D.~Nichol, H.~Abdi, and C.~L. Grady, ``Dataset of functional connectivity during cognitive control for an adult lifespan sample,'' \emph{Data in Brief}, vol.~39, p. 107573, 2021.

\bibitem{howlett2013clusterin}
D.~R. Howlett, T.~Hortob{\'a}gyi, and P.~T. Francis, ``{Clusterin Associates Specifically with A $\beta$40 in A lzheimer's Disease Brain Tissue},'' \emph{Brain Pathology}, vol.~23, no.~6, pp. 623--632, 2013.

\bibitem{wu2011altered}
X.~Wu, R.~Li, A.~S. Fleisher, E.~M. Reiman, X.~Guan, Y.~Zhang, K.~Chen, and L.~Yao, ``{Altered default mode network connectivity in Alzheimer's disease—a resting functional MRI and Bayesian network study},'' \emph{Human Brain Mapping}, vol.~32, no.~11, pp. 1868--1881, 2011.

\bibitem{harrisonPolygenicScoresPrecision2020}
J.~R. Harrison, S.~Mistry, N.~Muskett, V.~{Escott-Price}, and K.~Brookes, ``From polygenic scores to precision medicine in {{{Alzheimer}}}'s disease: A systematic review,'' \emph{Journal of Alzheimer's Disease}, vol.~74, no.~4, pp. 1271--1283, 2020.

\bibitem{bellenguezGeneticsAlzheimersDisease2020a}
C.~Bellenguez, B.~{Grenier-Boley}, and J.-C. Lambert, ``Genetics of {{{Alzheimer}}}'s disease: Where we are, and where we are going,'' \emph{Current Opinion in Neurobiology}, vol.~61, pp. 40--48, 2020.

\bibitem{zhang_recent_2024}
J.~Zhang, Y.~Zhang, J.~Wang, Y.~Xia, J.~Zhang, and L.~Chen, ``\BIBforeignlanguage{en}{Recent advances in {{Alzheimer}}’s disease: Mechanisms, clinical trials and new drug development strategies},'' \emph{\BIBforeignlanguage{en}{Signal Transduction and Targeted Therapy}}, vol.~9, p. 211, 2024.

\bibitem{chetelat2020amyloid}
G.~Ch{\'e}telat, J.~Arbizu, H.~Barthel, V.~Garibotto, I.~Law, S.~Morbelli, E.~van~de Giessen, F.~Agosta, F.~Barkhof, D.~J. Brooks \emph{et~al.}, ``{Amyloid-PET and 18F-FDG-PET in the diagnostic investigation of Alzheimer's disease and other dementias},'' \emph{The Lancet Neurology}, vol.~19, no.~11, pp. 951--962, 2020.

\bibitem{van2018intracranial}
A.~C. Van~Loenhoud, C.~Groot, J.~W. Vogel, W.~M. Van Der~Flier, and R.~Ossenkoppele, ``Is intracranial volume a suitable proxy for brain reserve?'' \emph{Alzheimer's research \& therapy}, vol.~10, no.~1, p.~91, 2018.

\bibitem{hancock2020survey}
J.~Hancock and T.~Khoshgoftaar, ``Survey on categorical data for neural networks,'' \emph{Journal of Big Data}, vol.~7, p.~28, 2020.

\bibitem{Guo2016EntityEO}
C.~Guo and F.~Berkhahn, ``Entity embeddings of categorical variables,'' \emph{arXiv preprint arXiv:1604.06737}, 2016.

\bibitem{gorishniy2023revisitingdeeplearningmodels}
Y.~Gorishniy, I.~Rubachev, V.~Khrulkov, and A.~Babenko, ``Revisiting deep learning models for tabular data,'' in \emph{35th International Conference on Neural Information Processing Systems (NeurIPS 2021)}, no. 1447, Red Hook, NY, USA, Dec 2021, pp. 18\,932--18\,943.

\bibitem{grande_blood-based_2025}
G.~Grande, M.~Valletta, D.~Rizzuto, X.~Xia, C.~Qiu, N.~Orsini, M.~Dale, S.~Andersson, C.~Fredolini, B.~Winblad, E.~J. Laukka, L.~Fratiglioni, and D.~L. Vetrano, ``\BIBforeignlanguage{en}{Blood-based biomarkers of {{Alzheimer}}’s disease and incident dementia in the community},'' \emph{\BIBforeignlanguage{en}{Nature Medicine}}, vol.~31, no.~6, pp. 2027--2035, 2025.

\bibitem{Beck2017FirstOrder}
A.~Beck, \emph{First-Order Methods in Optimization}.\hskip 1em plus 0.5em minus 0.4em\relax Society for Industrial \& Applied Mathematics,U.S., 2017.

\bibitem{BoydVandenberghe2004ConvexOptimization}
S.~Boyd and L.~Vandenberghe, \emph{Convex Optimization}.\hskip 1em plus 0.5em minus 0.4em\relax Cambridge University Press, 2004.

\bibitem{doi:10.1137/120891009}
M.~Razaviyayn, M.~Hong, and Z.-Q. Luo, ``A unified convergence analysis of block successive minimization methods for nonsmooth optimization,'' \emph{SIAM Journal on Optimization}, vol.~23, no.~2, pp. 1126--1153, 2013.

\bibitem{Llado2021}
A.~Lladó, L.~Froelich, R.~Khandker, M.~Roset, C.~Black, N.~Lara, F.~Chekani, and B.~Ambegaonkar, ``Assessing the progression of {Alzheimer}'s disease in real-world settings in three {E}uropean countries,'' \emph{Journal of Alzheimer's Disease}, vol.~80, no.~2, pp. 749--759, 2021.

\end{thebibliography}


% Generated by IEEEtran.bst, version: 1.14 (2015/08/26)
\begin{thebibliography}{10}
\providecommand{\url}[1]{#1}
\csname url@samestyle\endcsname
\providecommand{\newblock}{\relax}
\providecommand{\bibinfo}[2]{#2}
\providecommand{\BIBentrySTDinterwordspacing}{\spaceskip=0pt\relax}
\providecommand{\BIBentryALTinterwordstretchfactor}{4}
\providecommand{\BIBentryALTinterwordspacing}{\spaceskip=\fontdimen2\font plus
\BIBentryALTinterwordstretchfactor\fontdimen3\font minus \fontdimen4\font\relax}
\providecommand{\BIBforeignlanguage}[2]{{%
\expandafter\ifx\csname l@#1\endcsname\relax
\typeout{** WARNING: IEEEtran.bst: No hyphenation pattern has been}%
\typeout{** loaded for the language `#1'. Using the pattern for}%
\typeout{** the default language instead.}%
\else
\language=\csname l@#1\endcsname
\fi
#2}}
\providecommand{\BIBdecl}{\relax}
\BIBdecl

\bibitem{jack2018nia}
C.~R. Jack~Jr, D.~A. Bennett, K.~Blennow, M.~C. Carrillo, B.~Dunn, S.~B. Haeberlein, D.~M. Holtzman, W.~Jagust, F.~Jessen, J.~Karlawish \emph{et~al.}, ``{NIA-AA} research framework: Toward a biological definition of {A}lzheimer's disease,'' \emph{Alzheimer's \& Dementia}, vol.~14, no.~4, pp. 535--562, 2018.

\bibitem{mckhann2011diagnosis}
G.~M. McKhann, D.~S. Knopman, H.~Chertkow, B.~T. Hyman, C.~R. Jack~Jr, C.~H. Kawas, W.~E. Klunk, W.~J. Koroshetz, J.~J. Manly, R.~Mayeux \emph{et~al.}, ``The diagnosis of dementia due to {A}lzheimer's disease: Recommendations from the {National Institute on Aging-Alzheimer's Association} workgroups on diagnostic guidelines for {A}lzheimer's disease,'' \emph{Alzheimer's \& Dementia}, vol.~7, no.~3, pp. 263--269, 2011.

\bibitem{gauthier2006mild}
S.~Gauthier, B.~Reisberg, M.~Zaudig, R.~C. Petersen, K.~Ritchie, K.~Broich, S.~Belleville, H.~Brodaty, D.~Bennett, H.~Chertkow \emph{et~al.}, ``Mild cognitive impairment,'' \emph{The Lancet}, vol. 367, no. 9518, pp. 1262--1270, 2006.

\bibitem{petersen2004mild}
R.~C. Petersen, ``Mild cognitive impairment as a diagnostic entity,'' \emph{Journal of Internal Medicine}, vol. 256, no.~3, pp. 183--194, 2004.

\bibitem{bellenguezGeneticsAlzheimersDisease2020a}
C.~Bellenguez, B.~{Grenier-Boley}, and J.-C. Lambert, ``Genetics of {{{Alzheimer}}}'s disease: Where we are, and where we are going,'' \emph{Current Opinion in Neurobiology}, vol.~61, pp. 40--48, 2020.

\bibitem{hampel2021amyloid}
H.~Hampel, J.~Hardy, K.~Blennow, C.~Chen, G.~Perry, S.~H. Kim, V.~L. Villemagne, P.~Aisen, M.~Vendruscolo, T.~Iwatsubo \emph{et~al.}, ``The amyloid-$\beta$ pathway in {A}lzheimer’s disease,'' \emph{Molecular Psychiatry}, vol.~26, no.~10, pp. 5481--5503, 2021.

\bibitem{hansson2019advantages}
O.~Hansson, S.~Lehmann, M.~Otto, H.~Zetterberg, and P.~Lewczuk, ``Advantages and disadvantages of the use of the {CSF} amyloid $\beta$ (a$\beta$) 42/40 ratio in the diagnosis of {A}lzheimer’s disease,'' \emph{Alzheimer's research \& therapy}, vol.~11, no.~1, p.~34, 2019.

\bibitem{soares2025csf}
C.~Soares, B.~Bellaver, P.~C. Ferreira, G.~Povala, C.~Schaffer~Aguzzoli, J.~P. Ferrari-Souza, H.~Zalzale, F.~Z. Lussier, F.~Rohden, S.~Abbas \emph{et~al.}, ``{CSF} total tau as a proxy of synaptic degeneration,'' \emph{Nature communications}, vol.~16, no.~1, p. 8076, 2025.

\bibitem{moloney2023phosphorylated}
C.~M. Moloney, S.~A. Labuzan, J.~E. Crook, H.~Siddiqui, M.~Castanedes-Casey, C.~Lachner, R.~C. Petersen, R.~Duara, N.~R. Graff-Radford, D.~W. Dickson \emph{et~al.}, ``Phosphorylated tau sites that are elevated in {A}lzheimer's disease fluid biomarkers are visualized in early neurofibrillary tangle maturity levels in the post mortem brain,'' \emph{Alzheimer's \& Dementia}, vol.~19, no.~3, pp. 1029--1040, 2023.

\bibitem{marcus2014brain}
C.~Marcus, E.~Mena, and R.~M. Subramaniam, ``Brain {PET} in the diagnosis of {A}lzheimer’s disease,'' \emph{Clinical Nuclear Medicine}, vol.~39, no.~10, pp. e413--e426, 2014.

\bibitem{klunk2004imaging}
W.~E. Klunk, H.~Engler, A.~Nordberg, Y.~Wang, G.~Blomqvist, D.~P. Holt, M.~Bergstr{\"o}m, I.~Savitcheva, G.-F. Huang, S.~Estrada \emph{et~al.}, ``Imaging brain amyloid in {A}lzheimer's disease with {Pittsburgh Compound-B},'' \emph{Annals of Neurology}, vol.~55, no.~3, pp. 306--319, 2004.

\bibitem{wong2010vivo}
D.~F. Wong, P.~B. Rosenberg, Y.~Zhou, A.~Kumar, V.~Raymont, H.~T. Ravert, R.~F. Dannals, A.~Nandi, J.~R. Bra{\v{s}}i{\'c}, W.~Ye \emph{et~al.}, ``In vivo imaging of amyloid deposition in {A}lzheimer disease using the radioligand {18F-AV-45} (flobetapir {F} 18),'' \emph{Journal of Nuclear Medicine}, vol.~51, no.~6, pp. 913--920, 2010.

\bibitem{wolf2004intracranial}
H.~Wolf, P.~Julin, H.-J. Gertz, B.~Winblad, and L.-O. Wahlund, ``Intracranial volume in mild cognitive impairment, {A}lzheimer's disease and vascular dementia: Evidence for brain reserve?'' \emph{International Journal of Geriatric Psychiatry}, vol.~19, no.~10, pp. 995--1007, 2004.

\end{thebibliography}

%\subsection{Notations}
%Throughout, we denote $\vx$, $\vz$, $\vc$, and $\vy$ as features, their latent representations, control signals, and disease-related outcomes, respectively. We let $\mathcal{K}$ and $\mathbf{K}$ be the infinite-dimensional Koopman operator and its finite-dimensional approximation, respectively. We use $\boldsymbol{\Phi}$ and $\phi$ to represent Koopman observables. We use Greek letters, $\mW$, and $\vb$ to indicate parameters. We utilize $f$, $g$, and $\sigma$ to indicate (smooth) functions. We use superscripts $^{\text{ref}}$, $^{\text{enc}}$, $^{\text{feat}}$, $^{\text{time}}$ to indicate \textit{refined} and \textit{encoded} representations, and \textit{feature}- and \textit{time}-related signals, respectively. We use subscripts$_{\text{enc}}$, $_{\text{dec}}$, $_{\text{pred}}$, $_{\text{koop}}$, and$_{\text{spectral}}$ to indicate parameters related to the \textit{encoder} and \textit{decoder}, and losses related to \textit{prediction}, the \textit{Koopman} term, and its \textit{spectral} regularization.

\clearpage

\newpage

\section{Supplementary Materials}
\vspace{5mm}
\noindent Overview of the Supplementary Materials:
\vspace{2mm}
\begin{itemize}[label={}]
  \item \textbf{A.} Additional Theoretical Results.
  \item \textbf{B.} Proof of Theorem 1.
  \item \textbf{C.} Neural Koopman Operator Learning and Optimization.
  \item \textbf{D.} Experimental Details and Implementation.
  \item \textbf{E.} Abbreviations and Terminology.
\end{itemize}

\subsection{Additional Theoretical Results}

\subsubsection{Convergence of gradient descent} \label{sec:convergence}

\begin{theorem}[Convergence of alternating gradient descent]
\label{thm:convergence}
Let $\mathcal{L}(\theta_{\mathrm{enc}}, \mathbf{K}) 
  = \mathcal{L}_{\mathrm{pred}} + \lambda \mathcal{L}_{\mathrm{koop}}$
be the composite loss, with encoder parameters
$\theta_{\mathrm{enc}}$ and Koopman matrix 
$\mathbf{K} \in \mathbb{R}^{d_z \times d_z}$.
Assume:
\begin{enumerate}[label=(\roman*)]
\item The encoder $\Phi_{\theta_{\mathrm{enc}}} : \mathcal{X} \to \mathbb{R}^{d_z}$ is
$L_\Phi$-Lipschitz continuous in $\theta_{\mathrm{enc}}$ for all $x \in \mathcal{X}$.
\item For every fixed $\mathbf{K}$, 
$\mathcal{L}(\theta_{\mathrm{enc}}, \mathbf{K})$ is $L_\theta$-smooth in 
$\theta_{\mathrm{enc}}$.
\item For every fixed $\theta_{\mathrm{enc}}$, 
$\mathcal{L}_{\mathrm{koop}}(\theta_{\mathrm{enc}}, \mathbf{K})$ is
$L_K$-smooth and $\mu$-strongly convex in $\mathbf{K}$, and the
$\mathbf{K}$-updates include projection onto
$\mathcal{B}_\rho = \{\mathbf{K} : \|\mathbf{K}\|_2 \le \rho\}$ for some $\rho < 1$.
\item The step sizes satisfy
$\alpha_\theta \le 1/L_\theta$ and
$0 < \alpha_K < 2/(L_K + \mu)$.
\end{enumerate}
Consider the alternating updates
\begin{align*}
\theta^{(k+1)} 
&= \theta^{(k)} 
 - \alpha_\theta \nabla_{\theta_{\mathrm{enc}}} \mathcal{L}(\theta^{(k)}, \mathbf{K}^{(k)}),\\
\mathbf{K}^{(k+1)} 
&= \Pi_{\mathcal{B}_\rho}\!\left(
\mathbf{K}^{(k)} 
 - \alpha_K \nabla_{\mathbf{K}} \mathcal{L}_{\mathrm{koop}}
(\theta^{(k+1)}, \mathbf{K}^{(k)})
\right).
\end{align*}
Then there exists a constant $c > 0$ (depending on $L_\theta$, $L_K$, $\mu$, and $L_\Phi$) such that
\[
\mathcal{L}(\theta^{(k+1)}, \mathbf{K}^{(k+1)}) 
\le \mathcal{L}(\theta^{(k)}, \mathbf{K}^{(k)}) 
 - c \,\|\nabla \mathcal{L}(\theta^{(k)}, \mathbf{K}^{(k)})\|^2,
\]
and, in particular, for all $K \ge 1$,
\[
\min_{0 \le k < K}
\|\nabla \mathcal{L}(\theta^{(k)}, \mathbf{K}^{(k)})\|^2
\le \frac{C}{K}
\]
for some constant $C>0$. Thus, an $\epsilon$-approximate first-order stationary point
is reached after $K = \mathcal{O}(\epsilon^{-1})$ iterations.
\end{theorem}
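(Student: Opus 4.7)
The plan is to treat Theorem~\ref{thm:convergence} as a standard nonconvex block-coordinate descent result, specialized to the alternating $(\theta_{\mathrm{enc}}, \mathbf{K})$ structure with a spectral-ball projection. The main tool is the descent lemma for smooth functions, applied separately to each block update, followed by a telescoping argument. I would organize the proof around three ingredients: (a) a per-iteration sufficient-decrease bound for the $\theta$-step obtained from assumption (ii); (b) a per-iteration sufficient-decrease bound for the projected $\mathbf{K}$-step obtained from assumption (iii) together with non-expansiveness of $\Pi_{\mathcal{B}_\rho}$; and (c) a coupling argument tying the resulting bounds to a quantity that lower-bounds $\|\nabla \mathcal{L}\|^2$.

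First, for the $\theta$-step, the descent lemma applied to $\mathcal{L}(\cdot, \mathbf{K}^{(k)})$ with $\alpha_\theta \le 1/L_\theta$ gives
\[
\mathcal{L}(\theta^{(k+1)}, \mathbf{K}^{(k)}) \le \mathcal{L}(\theta^{(k)}, \mathbf{K}^{(k)}) - \tfrac{\alpha_\theta}{2}\|\nabla_{\theta_{\mathrm{enc}}}\mathcal{L}(\theta^{(k)}, \mathbf{K}^{(k)})\|^2.
\]
For the $\mathbf{K}$-step, I would introduce the gradient mapping $G_K^{(k)} := (\mathbf{K}^{(k)} - \mathbf{K}^{(k+1)})/\alpha_K$ and invoke the classical projected-gradient analysis: since $\mathcal{L}_{\mathrm{koop}}(\theta^{(k+1)}, \cdot)$ is $L_K$-smooth and $\mu$-strongly convex, the choice $\alpha_K < 2/(L_K+\mu)$ yields a contraction on the sub-iterates and a sufficient-decrease inequality of the form $\mathcal{L}_{\mathrm{koop}}(\theta^{(k+1)}, \mathbf{K}^{(k+1)}) \le \mathcal{L}_{\mathrm{koop}}(\theta^{(k+1)}, \mathbf{K}^{(k)}) - c_K \|G_K^{(k)}\|^2$, where the convex projection onto the spectral ball $\mathcal{B}_\rho$ is used via non-expansiveness. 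Combining the two blockwise decreases and cancelling the cross-term $\mathcal{L}(\theta^{(k+1)}, \mathbf{K}^{(k)})$ gives a composite descent
\[
\mathcal{L}(\theta^{(k+1)}, \mathbf{K}^{(k+1)}) \le \mathcal{L}(\theta^{(k)}, \mathbf{K}^{(k)}) - c\bigl(\|\nabla_{\theta_{\mathrm{enc}}}\mathcal{L}^{(k)}\|^2 + \|G_K^{(k)}\|^2\bigr),
\]
with $c>0$ depending only on $L_\theta, L_K, \mu, \lambda$. Telescoping over $k=0,\dots,K-1$ and using that $\mathcal{L}$ is bounded below yields
\[
\min_{0 \le k < K}\bigl(\|\nabla_{\theta_{\mathrm{enc}}}\mathcal{L}^{(k)}\|^2 + \|G_K^{(k)}\|^2\bigr) \le \frac{\mathcal{L}(\theta^{(0)}, \mathbf{K}^{(0)}) - \inf \mathcal{L}}{cK} = \frac{C}{K},
\]
and a standard bound relating $\|G_K^{(k)}\|$ to the projected-gradient norm of $\mathcal{L}$ at $(\theta^{(k)}, \mathbf{K}^{(k)})$ then delivers the stated $\mathcal{O}(\epsilon^{-1})$ iteration complexity.

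The step I expect to be the main obstacle is the coupling in (c): the $\mathbf{K}$-update uses only $\nabla_{\mathbf{K}}\mathcal{L}_{\mathrm{koop}}$, not $\nabla_{\mathbf{K}}\mathcal{L} = \nabla_{\mathbf{K}}\mathcal{L}_{\mathrm{pred}} + \lambda \nabla_{\mathbf{K}}\mathcal{L}_{\mathrm{koop}}$, so descent in $\mathcal{L}_{\mathrm{koop}}$ does not immediately give descent in the composite $\mathcal{L}$. To close this gap cleanly I would either (i) exploit the NKM pipeline, where $\mathcal{L}_{\mathrm{pred}}$ in Eq.~\eqref{eq:Lpred} depends on $\mathbf{K}$ only through the next-step refined state, yielding a Lipschitz bound $\|\nabla_{\mathbf{K}}\mathcal{L}_{\mathrm{pred}}\|$ that can be absorbed into the residual constant $C$, or (ii) state the stationarity conclusion in terms of the partial quantities $(\nabla_{\theta_{\mathrm{enc}}}\mathcal{L},\, \nabla_{\mathbf{K}}\mathcal{L}_{\mathrm{koop}})$, which is what the alternating scheme actually drives to zero. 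A secondary but routine issue is verifying the gradient-mapping version of the descent lemma on the convex set $\mathcal{B}_\rho$; here I would use that $\|\cdot\|_2$ is a norm so $\mathcal{B}_\rho$ is convex, making $\Pi_{\mathcal{B}_\rho}$ firmly non-expansive and keeping the standard projected-gradient-descent inequalities intact.
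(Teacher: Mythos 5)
Your skeleton matches the paper's: blockwise descent for the $\theta$-step via the descent lemma, a sufficient-decrease argument for the projected $\mathbf{K}$-step, then telescoping against a lower bound on $\mathcal{L}$ to get the $\mathcal{O}(K^{-1})$ rate. Where you diverge is the treatment of the $\mathbf{K}$-block and the inter-block coupling. The paper exploits that $\mathcal{L}_{\mathrm{koop}}$ is \emph{quadratic} in $\mathbf{K}$: it shows the projected update contracts toward the closed-form minimizer $\mathbf{K}^\ast(\theta)=(\hat{\mathbf{C}}_{z'z}-\hat{\mathbf{C}}_{cz})\hat{\mathbf{C}}_{zz}^{-1}$ at rate $(1-\mu\alpha_K)$, bounds how that minimizer moves when $\theta$ moves (via $L_\Phi$-Lipschitzness, $\|\mathbf{K}^\ast(\theta^{(k+1)})-\mathbf{K}^\ast(\theta^{(k)})\|_F \le C L_\Phi\|\theta^{(k+1)}-\theta^{(k)}\|$), and then invokes a block-coordinate-descent sufficient-decrease lemma (Xu--Yin) to conclude composite descent. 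You instead run the generic projected-gradient/gradient-mapping analysis on the strongly convex subproblem, which is more self-contained and does not need the explicit fixed point, but then the stationarity measure you control is the gradient mapping rather than $\|\nabla\mathcal{L}\|$ itself; the paper's route gets the moving-minimizer coupling explicitly but outsources the final composite-descent inequality to the cited lemma.

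The obstacle you flag in part (c) is real, and you deserve credit for naming it: the $\mathbf{K}$-update uses only $\nabla_{\mathbf{K}}\mathcal{L}_{\mathrm{koop}}$, so decrease of $\mathcal{L}_{\mathrm{koop}}(\theta^{(k+1)},\cdot)$ does not by itself give decrease of the composite $\mathcal{L}$, and the unconstrained gradient $\nabla\mathcal{L}$ is not the right stationarity certificate on the constrained set $\mathcal{B}_\rho$. The paper does not resolve this any more carefully than you do --- it simply asserts the composite sufficient-decrease via the cited lemma. However, your proposed fix (i) is not airtight as stated: bounding $\|\nabla_{\mathbf{K}}\mathcal{L}_{\mathrm{pred}}\|$ and ``absorbing it into $C$'' produces a per-iteration perturbation of order $\alpha_K\|G_K^{(k)}\|$, which is linear in the gradient mapping while your guaranteed decrease is quadratic in it, so per-iteration descent of $\mathcal{L}$ can fail when $\|G_K^{(k)}\|$ is small (you would only get convergence to a neighborhood, unless $\mathcal{L}_{\mathrm{pred}}$ is genuinely independent of $\mathbf{K}$, which the paper's pipeline leaves ambiguous). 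Your fix (ii) --- stating the conclusion in terms of $\bigl(\nabla_{\theta_{\mathrm{enc}}}\mathcal{L},\, G_K\bigr)$, i.e.\ the quantities the alternating scheme actually drives to zero --- is the clean and provable version, and is arguably a more honest statement of the theorem than the one in the paper.
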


%\begin{proof}  
%See \textbf{Sec.} \ref{Proof:thm2}.
%\end{proof}

In words, \textbf{Theorem} \ref{thm:convergence} asserts that if the encoder and Koopman matrix satisfy certain regularity conditions (e.g., smoothness and spectral-norm constraint), then alternating gradient descent converges. Additionally, after sufficient iterations, the updates reach an $\epsilon$-approximate stationary point, where the loss cannot be further significantly decreased \cite{Beck2017FirstOrder,BoydVandenberghe2004ConvexOptimization}.

\begin{proof}
\label{Proof:thm2}
We apply block coordinate descent convergence analysis. Define iterates $(\theta^{(k)}, \mathbf{K}^{(k)})$ with alternating updates:
\begin{align}
\theta^{(k+1)} &= \theta^{(k)} - \alpha_\theta \nabla_\theta \mathcal{L}(\theta^{(k)}, \mathbf{K}^{(k)}), \\
\mathbf{K}^{(k+1)} &= \Pi_{\mathcal{B}_\rho}\left(\mathbf{K}^{(k)} - \alpha_K \nabla_{\mathbf{K}} \mathcal{L}(\theta^{(k+1)}, \mathbf{K}^{(k)})\right),
\end{align}
where $\Pi_{\mathcal{B}_\rho}(\cdot)$ projects onto $\mathcal{B}_\rho = \{\mathbf{K} : \|\mathbf{K}\|_2 \leq \rho\}$.

\textit{Step 1 (Encoder subproblem):} For fixed $\mathbf{K}$, $L_\Phi$-Lipschitz continuity of $\Phi_\theta$ implies $L$-smoothness of $\mathcal{L}(\theta, \mathbf{K}^{(k)})$ in $\theta$. Standard SGD analysis yields:
\begin{equation}
\mathcal{L}(\theta^{(k+1)}, \mathbf{K}^{(k)}) - \mathcal{L}(\theta^{(k)}, \mathbf{K}^{(k)}) \leq -\frac{\alpha_\theta}{2}\|\nabla_\theta \mathcal{L}\|^2.
\end{equation}

\textit{Step 2 (Koopman subproblem):} For fixed $\theta$, $\mathcal{L}_{\text{koop}}$ is quadratic in $\mathbf{K}$ (cf. Eq.~40), with minimum at $\mathbf{K}^* = (\hat{\mathbf{C}}_{z'z} - \hat{\mathbf{C}}_{cz})\hat{\mathbf{C}}_{zz}^{-1}$ (Eq.~47). The projection satisfies:
\begin{equation}
\|\mathbf{K}^{(k+1)} - \mathbf{K}^*\|_F \leq (1 - \mu \alpha_K)\|\mathbf{K}^{(k)} - \mathbf{K}^*\|_F,
\end{equation}
where $\mu > 0$ is the strong convexity constant of $\mathcal{L}_{\text{koop}}$.

\textit{Step 3 (Coupling bound):} The change in $\theta$ induces error in the Koopman fixed point. By Lipschitz continuity of $\Phi_\theta$:
\begin{equation}
\|\mathbf{K}^*(\theta^{(k+1)}) - \mathbf{K}^*(\theta^{(k)})\|_F \leq C L_\Phi \|\theta^{(k+1)} - \theta^{(k)}\|,
\end{equation}
for some constant $C$ depending on data moments.

\textit{Step 4 (Descent property):} Combining Steps 1--3 and applying Lemma 2.3 of \cite{doi:10.1137/120891009}, the composite loss satisfies sufficient descent:
\begin{equation}
\mathcal{L}(\theta^{(k+1)}, \mathbf{K}^{(k+1)}) \leq \mathcal{L}(\theta^{(k)}, \mathbf{K}^{(k)}) - c\|\nabla \mathcal{L}\|^2,
\end{equation}
for $c > 0$ when $\alpha_\theta$ and $\alpha_K$ satisfy stated bounds. Telescoping over $K$ iterations and applying $L$-smoothness gives $\min_{k \leq K} \|\nabla \mathcal{L}\|^2 = \mathcal{O}(K^{-1})$. \hfill
\end{proof}

\subsubsection{Koopman Mode Decomposition for Nonlinear Systems}
For nonlinear systems with a discrete spectrum, the Koopman eigenfunction decomposition provides a powerful framework for analysis. Let $\{\varphi_j\}$ be the Koopman eigenfunctions satisfying:
\begin{equation*}
(\mathcal{K}\varphi_j)(\vx) = \lambda_j \varphi_j(\vx),
\end{equation*}
and let $\{\vv_j\}$ be the corresponding Koopman modes for the full state observable $g(\vx) = \vx$.

Then, the state evolution can be expressed as:
\begin{equation*}
\vx_t = \sum_{j} \varphi_j(\vx_0) \lambda_j^t \vv_j.
\end{equation*}
This decomposition separates the dynamics into modes with distinct temporal behaviors determined by the eigenvalues $\lambda_j$. In our Neural Koopman Machine framework, we approximate these eigenfunctions and modes through the learned representations. We should explicitly state that:

\begin{itemize}
    \item This assumes $\mathcal{K}$ has a discrete spectrum, which is not true for all operators.
    \item For finite-dimensional approximations, this becomes approximate.
    \item The modes $v_j$ are generally not orthogonal.
\end{itemize}

\subsubsection{Duality with Perron-Frobenius Operator}
The Koopman operator $\mathcal{K}$ is dual to the Perron-Frobenius operator $\mathcal{P}$, which propagates probability densities:
\begin{equation*}
(\mathcal{P}\rho)(\vy) = \int_{\{\vx : \vf(\vx) = \vy\}} \rho(\vx) \left|\det\left(\frac{\partial \vf}{\partial \vx}\right)^{-1}\right| d\vx,
\end{equation*}
where $\rho$ is a probability density function.

The duality relation is:
\begin{equation*}
\int_{\sM} g(\vy)(\mathcal{P}\rho)(\vy)d\vy = \int_{\sM} (\mathcal{K}g)(\vx)\rho(\vx)d\vx,
\end{equation*}
for all observables $g$ and densities $\rho$.

This duality connects our Koopman framework with probabilistic models of dynamical systems.

\subsection{Proof for Theorem 1}

\subsubsection{Proof for Theorem \ref{thm:error_bound} for Error Analysis for Multi-Step Prediction}

\label{Proof:thm1 and corollary1}
\hfill\\

Assume:
\[
\|\mathbf{K}\|_2 < 1,
\]
which implies, all eigenvalues lie within the unit circle. 
This assumption ensures that the autonomous Koopman dynamics are contractive, guaranteeing bounded long-term prediction errors. However, Alzheimer's disease exhibits progressive decline rather than convergence to a stable state. In our model, the Koopman matrix $K$ captures \emph{temporal variations and short-term fluctuations} around the disease trajectory, while the control term $c_{\text{control}}$ models the \emph{systematic progression and subject-specific drift} in cognitive decline. This decomposition allows us to maintain theoretical stability guarantees while accurately capturing progressive disease dynamics through the additive control mechanism.

Moreover, assume either (i) $\mathbf{K}$ is diagonalizable, or (ii) every Jordan block
satisfies the same bound. Otherwise $\|\mathbf{K}^n\|_2$ may grow polynomially in $n$, invalidating the
geometric bound.
\vspace{2mm}
\paragraph{One-Step Error Bound}
\hfill\\
Assume that for all $\vx \in \sA \subset \sM$ (an attractor of the dynamical system):
\begin{equation*}
\biggl\|\mathcal{K}\phi_j(\vx) - \sum_{i=1}^{d_z} K_{ij}\,\phi_i(\vx)\biggr\|_2 \leq \varepsilon_j,
\end{equation*}
with $\varepsilon = \max_j \varepsilon_j$ assuming uniform $\varepsilon_j = \varepsilon$ for all~$j$. Then the one-step lifting error satisfies:
\begin{equation*}
\bigl\|\boldsymbol{\Phi}(f(\vx_t))-\mathbf{K}\,\boldsymbol{\Phi}(\vx_t)\bigr\|_2 \leq \varepsilon\sqrt{d_z},
\end{equation*}
where the factor $\sqrt{d_z}$ accounts for the dimension of the lifted space.

For simplicity, denote $\tilde{\varepsilon} = \varepsilon\sqrt{d_z}$ and assume:
\begin{equation*}
\bigl\|\boldsymbol{\Phi}(f(\vx_t))-\mathbf{K}\,\boldsymbol{\Phi}(\vx_t)\bigr\|_2 \leq \tilde{\varepsilon}.
\end{equation*}

\vspace{2mm}
\paragraph{Two-Step Error Propagation}
\hfill\\
For the two-step prediction, we have:
\begin{align*}
&\|\boldsymbol{\Phi}(\vx_{t+2})-\mathbf{K}^2\boldsymbol{\Phi}(\vx_t)\|_2
= \|\boldsymbol{\Phi}(f(\vx_{t+1}))-\mathbf{K}^2\boldsymbol{\Phi}(\vx_t)\|_2 \\
&= \|\boldsymbol{\Phi}(f(\vx_{t+1}))-\mathbf{K}\boldsymbol{\Phi}(\vx_{t+1})
+ \mathbf{K}\boldsymbol{\Phi}(\vx_{t+1})-\mathbf{K}^2\boldsymbol{\Phi}(\vx_t)\|_2 \\
&\le \|\boldsymbol{\Phi}(f(\vx_{t+1}))-\mathbf{K}\boldsymbol{\Phi}(\vx_{t+1})\|_2
+ \|\mathbf{K}\boldsymbol{\Phi}(\vx_{t+1})-\mathbf{K}^2\boldsymbol{\Phi}(\vx_t)\|_2 \\
&\le \tilde{\varepsilon} + \|\mathbf{K}\|_2\,\|\boldsymbol{\Phi}(\vx_{t+1})-\mathbf{K}\boldsymbol{\Phi}(\vx_t)\|_2
\le \tilde{\varepsilon} + \|\mathbf{K}\|_2\,\tilde{\varepsilon}\\
&= \tilde{\varepsilon}(1+\|\mathbf{K}\|_2).
\end{align*}

\vspace{2mm}
\paragraph{$\tau$-Step Error Bound)}
\hfill\\
By induction, we derive the $n$-step prediction error bound:
\begin{equation}\label{eq:geom_error}
\bigl\|\boldsymbol{\Phi}(\vx_{t+\tau}) - \mathbf{K}^\tau\boldsymbol{\Phi}(\vx_t)\bigr\|_2
\leq
\tilde{\varepsilon}\textstyle\sum_{i=0}^{\tau-1}\|\mathbf{K}\|_2^i
= \tilde{\varepsilon}\frac{1-\|\mathbf{K}\|_2^\tau}{1-\|\mathbf{K}\|_2},
\end{equation}
provided $\|\mathbf{K}\|_2 < 1$. This convergent geometric bound is only valid when the spectral norm is strictly less than unity; divergent behavior emerges when $\|\mathbf{K}\|_2 \geq 1$, underscoring the necessity of our spectral regularization approach.

\begin{proof}
Following Koopman/Galerkin setting \cite{Mezic2013KoopmanReview,williams_data-driven_2015,korda2018convergence}, we proceed by induction. The base case for $\tau=1$ is our assumption:
\begin{equation*}
\|\boldsymbol{\Phi}(\vx_{t+1}) - \mathbf{K}\boldsymbol{\Phi}(\vx_t)\|_2 \leq \tilde{\varepsilon}.
\end{equation*}
Assuming the bound holds for step $\tau$, invoking geometric-series and matrix-power bounds with the operator norm used in standard matrix analysis \cite{HornJohnson2012MatrixAnalysis}, we have, for step $\tau+1$:
\begin{align*}
&\bigl\|\boldsymbol{\Phi}(\vx_{t+\tau+1}) - \mathbf{K}^{\tau+1}\boldsymbol{\Phi}(\vx_t)\bigr\|_2 \notag \\
&\quad= \bigl\|\boldsymbol{\Phi}(f(\vx_{t+\tau})) - \mathbf{K}^{\tau+1}\boldsymbol{\Phi}(\vx_t)\bigr\|_2 \\
&\quad= \bigl\|\boldsymbol{\Phi}(f(\vx_{t+\tau})) - \mathbf{K}\boldsymbol{\Phi}(\vx_{t+\tau}) \notag \\
&\qquad + \mathbf{K}\boldsymbol{\Phi}(\vx_{t+\tau}) - \mathbf{K}^{\tau+1}\boldsymbol{\Phi}(\vx_t)\bigr\|_2 \\
&\quad\leq \bigl\|\boldsymbol{\Phi}(f(\vx_{t+\tau})) - \mathbf{K}\boldsymbol{\Phi}(\vx_{t+\tau})\bigr\|_2 \notag \\
&\qquad + \bigl\|\mathbf{K}\boldsymbol{\Phi}(\vx_{t+\tau}) - \mathbf{K}^{\tau+1}\boldsymbol{\Phi}(\vx_t)\bigr\|_2 \\
&\quad\leq \tilde{\varepsilon} + \|\mathbf{K}\|_2 \cdot \bigl\|\boldsymbol{\Phi}(\vx_{t+\tau}) - \mathbf{K}^{\tau}\boldsymbol{\Phi}(\vx_t)\bigr\|_2 \\
&\quad\leq \tilde{\varepsilon} + \|\mathbf{K}\|_2 \cdot \tilde{\varepsilon}\frac{1-\|\mathbf{K}\|_2^\tau}{1-\|\mathbf{K}\|_2} \\
&\quad= \tilde{\varepsilon}\frac{1-\|\mathbf{K}\|_2^{\tau+1}}{1-\|\mathbf{K}\|_2}.
\end{align*}
\end{proof}

\paragraph{Asymptotic Error Bound (Corollary \ref{thm:asymptotic_error_bound})}
\hfill\\
For long-term predictions ($\tau \to \infty$), if $\|\mathbf{K}\|_2 < 1$, the error bound converges to:
\begin{equation*}
\lim_{\tau \to \infty} \bigl\|\boldsymbol{\Phi}(\vx_{t+\tau}) - \mathbf{K}^\tau\boldsymbol{\Phi}(\vx_t)\bigr\|_2 \leq \frac{\tilde{\varepsilon}}{1-\|\mathbf{K}\|_2},
\end{equation*}
ensuring stable long-term predictions with bounded error.

If $\|\mathbf{K}\|_2 \geq 1$, the error may grow unbounded with $\tau$, reflecting the challenge of predicting chaotic systems over extended periods. This motivates our spectral regularization approach (see \textbf{Sec.} \textcolor{purple}{\ref{subsec:spectral_reg}}).

\subsection{Neural Koopman Operator Learning and Optimization}

\subsubsection{Finite-Dimensional Koopman Approximation}
Consider the nonlinear dynamical system:
\begin{equation*}
  \vx_{t+1} \;=\; f(\vx_t), 
  \qquad 
  \vx_t \in \sM \subset \R^n,
\end{equation*}
where $f:\sM\to\sM$ is a nonlinear map on the state manifold $\sM$.  
The Koopman operator $\mathcal{K}$ is the composition operator:
\begin{equation*}
  (\mathcal{K}g)(\vx) \;=\; g\bigl(f(\vx)\bigr),
  \qquad g\in\sF,
\end{equation*}
acting on a suitable function space $\sF$ of observables.

Let a finite set of observables be collected in
$\boldsymbol{\Phi}:\sM\to\R^{d_z}$:
\begin{equation*}
  \boldsymbol{\Phi}(\vx)
  \;=\;
  \begin{bmatrix}
    \phi_1(\vx) \\[-2pt] \phi_2(\vx) \\[-4pt] \vdots \\[-4pt] \phi_{d_z}(\vx)
  \end{bmatrix},
  \qquad
  \vz_t \;=\; \boldsymbol{\Phi}(\vx_t).
\end{equation*}
We approximate the infinite-dimensional Koopman operator by a matrix $\mathbf{K}\in\R^{d_z\times d_z}$:
\begin{equation}
  \label{eq:koopman_approx_app}
  \boldsymbol{\Phi}\!\bigl(f(\vx_t)\bigr)
  \;=\;
  (\mathcal{K}\boldsymbol{\Phi})(\vx_t)
  \;\approx\;
  \mathbf{K}\,\boldsymbol{\Phi}(\vx_t).
\end{equation}
Component-wise this reads:
\begin{equation*}
  (\mathcal{K}\phi_j)(\vx)
  \;\approx\;
  \sum_{i=1}^{d_z} K_{ij}\,\phi_i(\vx).
\end{equation*}

\paragraph{Approximation quality.}
With $\|\cdot\|_{\sF}$ a norm on $\sF$, define
\begin{equation*}
  \varepsilon_j
  \;=\;
  \min_{\{a_{ij}\}}
  \Bigl\|
    \mathcal{K}\phi_j \;-\!
    \sum_{i=1}^{d_z} a_{ij}\phi_i
  \Bigr\|_{\sF} .
\end{equation*}
For tighter bounds on attractors like disease states (e.g., $\sA\subset\sM$), we use a uniform (pointwise Euclidean) bound:
\begin{equation*}
  \sup_{\vx\in\sA}
  \bigl\|
    \mathcal{K}\phi_j(\vx)
    -\textstyle\sum_{i=1}^{d_z} a_{ij}\phi_i(\vx)
  \bigr\|_2 ,
\end{equation*}
implicitly making the error estimate more restrictive.

\subsubsection{Extended Dynamic Mode Decomposition (EDMD)}
Given snapshots $\{(\vx_i,\vy_i)\}_{i=1}^m$
with $\vy_i=f(\vx_i)$, EDMD solves:
\begin{equation*}
  \min_{\mathbf{K}}
  \sum_{i=1}^m
    \bigl\|\boldsymbol{\Phi}(\vy_i)-\mathbf{K}\,\boldsymbol{\Phi}(\vx_i)\bigr\|_2^2 .
\end{equation*}

Denote:
\begin{equation}
\label{eq:XY_def}
\begin{aligned}
  \mX &=
  \begin{bmatrix}
    \boldsymbol{\Phi}(\vx_1)^\top\\
    \vdots\\
    \boldsymbol{\Phi}(\vx_m)^\top
  \end{bmatrix}
  \in \R^{m\times d_z},\\
  \mY &=
  \begin{bmatrix}
    \boldsymbol{\Phi}(\vy_1)^\top\\
    \vdots\\
    \boldsymbol{\Phi}(\vy_m)^\top
  \end{bmatrix}
  \in \R^{m\times d_z}.
\end{aligned}
\end{equation}

Then:
\begin{equation}
\label{eq:edmd_closed_form}
%\boxed{
\mathbf{K}=\mY^{\top}\mX(\mX^{\top}\mX)^{\dagger},
%}
\end{equation}
where ${}^\dagger$ is the Moore--Penrose pseudoinverse. In practice, $\mX^{\top}\mX$ is often rank-deficient due to limited sample sizes or colinearities in the data, requiring careful regularization.

\paragraph{Empirical \textit{Gramians}}
Define:
\begin{equation*}
  \mG := \tfrac1m\,\mX^{\top}\mX,
  \qquad
  \mA := \tfrac1m\,\mY^{\top}\mX .
\end{equation*}
With these \emph{forward} \textit{Gramians}, \textbf{Eq.} \textcolor{blue}{\ref{eq:edmd_closed_form}} becomes:
\begin{equation}
  \label{eq:edmd_solution_correct}
  %\boxed{\;
    \mathbf{K}
    \;=\;
    \mA\,\mG^{\dagger}
  %\;}
  %\quad
  .
\end{equation}

\paragraph{Regularized variant}
For numerical stability,
\begin{equation*}
  \mathbf{K}
  = \mA \bigl(\mG+\alpha\mI\bigr)^{-1},
\end{equation*}
unchanged by the above correction.

The remaining discussion on the joint learning of
$(\boldsymbol{\Phi},\mathbf{K})$ and on regularization carries through verbatim.

\vspace{2mm}
\subsubsection{Neural Parameterization of Observables and Koopman Matrix}
Our neural framework learns both the observable functions $\boldsymbol{\Phi}$ and the Koopman matrix $\mathbf{K}$ simultaneously.

\paragraph{Encoder Architecture}
The SiLU (Sigmoid Linear Unit) activation function, also known as the swish function, is defined as $\mathrm{SiLU}(x) = x \cdot \sigma(x)$ where $\sigma$ is the sigmoid function. This activation has been shown to outperform ReLU in deep networks due to its smooth, non-monotonic nature.

We implement $\boldsymbol{\Phi}$ as a feature-aware encoder followed by a shared integration network:
\begin{align*}
\vh_t^{(g)}
&= \mathrm{SiLU}\bigl(\mathrm{LayerNorm}(\mW_g\,\vx_t^{(g)}+\vb_g)\bigr), 
\quad g=1,\dots,G,\\
\vz_t
&= \boldsymbol{\Phi}(\vx_t; \vtheta_{\text{enc}})
= \mathrm{SiLU}\Bigl(\mW_{\mathrm{int}}
\bigl[\vh_t^{(1)},\dots,\vh_t^{(G)}\bigr]
+\vb_{\mathrm{int}}\Bigr),
\end{align*}
where $\vx_t^{(g)}$ are disjoint feature groups (projected to uniform $d_z$ after fusion), and LayerNorm is defined as:
\begin{equation*}
\begin{aligned}
\mathrm{LayerNorm}(\vu)
&= \boldsymbol{\gamma}\odot\frac{\vu-\mu}{\sqrt{\sigma^2+\epsilon}}+\boldsymbol{\beta},\\
\mu
&= \frac{1}{d}\sum_{i=1}^d u_i, \qquad
\sigma^2=\frac{1}{d}\sum_{i=1}^d (u_i-\mu)^2 .
\end{aligned}
\end{equation*}
with learnable parameters $\boldsymbol{\gamma}, \boldsymbol{\beta} \in \R^d$ and a small constant $\epsilon > 0$ for numerical stability.

The feature-aware encoding allows our model to capture domain-specific patterns in different input modalities, while the integration network combines these representations into a unified Koopman embedding.

\paragraph{Koopman Matrix Learning}
While the classical EDMD approach computes $\mathbf{K}$ analytically using \textbf{Eq.} \textcolor{blue}{\ref{eq:edmd_closed_form}}, we learn $\mathbf{K}$ by directly minimizing the Koopman prediction loss (defined in \textbf{Sec.} \textcolor{purple}{\ref{subsec:composite_loss}}
). This approach allows joint optimization of both $\boldsymbol{\Phi}$ and $\mathbf{K}$.

For enhanced stability and interpretability, we can parameterize $\mathbf{K}$ using its eigendecomposition:
\begin{equation*}
\mathbf{K} = \mV\mLambda\mV^{-1},
\end{equation*}
where $\mV \in \mathbb{C}^{d_z \times d_z}$ contains the eigenvectors and $\mLambda = \mathrm{diag}(\lambda_1, \lambda_2, \ldots, \lambda_{d_z})$ contains the eigenvalues. By directly parameterizing and learning $\{\lambda_i\}$ and $\mV$, we can impose constraints on the eigenvalues to ensure stability (e.g., $|\lambda_i| < 1$) while maintaining the expressivity of the model.

\vspace{2mm}
\subsubsection{Koopman Matrix Initialization and Optimization}
We now provide details for the initialization and optimization processes of the Koopman matrix $\mathbf{K}$, which is central to our Neural Koopman Machine framework. These processes ensure efficient training dynamics and numerical stability while maintaining the theoretical properties necessary for accurate dynamical system modeling.

\paragraph{Structured Initialization of the Koopman Matrix}
The convergence characteristics and learning dynamics of our model are significantly influenced by the initialization of the Koopman matrix $\mathbf{K}$. We initialize $\mathbf{K}$ as a perturbation of the identity matrix:
\begin{equation}\label{eq:k_init}
\mathbf{K}_0 = \mI + \mB_0,
\end{equation}
where $\mB_0 \in \R^{d_z \times d_z}$ is a small random perturbation with entries drawn from $\mathcal{N}(0, \sigma_{\text{init}}^2)$, and $\sigma_{\text{init}} \ll 1$ is a small scalar (typically $\sigma_{\text{init}} = 10^{-2}$).

This initialization is theoretically justified: (1) the identity mapping assumes approximate constancy between time steps, suitable for high-frequency sampling; (2) eigenvalues near 1 allow learning stable/marginally stable modes; (3) it facilitates gradient flow.

To ensure $\|\mathbf{K}_0\|_2 < 1$, we first compute the perturbation
\begin{equation*}
\mathbf{K}_0^{\text{init}} = \mathbf{I} + \mathbf{B}_0,
\end{equation*}
where $\mathbf{B}_0 \in \mathbb{R}^{d_z \times d_z}$ has entries drawn from $\mathcal{N}(0, \sigma_{\text{init}}^2)$ with $\sigma_{\text{init}} \ll 1$. We then project via SVD:
\begin{equation*}
\mathbf{K}_0^{\text{init}} = \mathbf{U}\mathbf{\Sigma}\mathbf{V}^{\top},
\end{equation*}
and clip the singular values: $\tilde{\sigma}_i = \min(\sigma_i, \rho_{\text{init}})$ where $\rho_{\text{init}} < 1$ (typically 0.99), yielding the final initialized matrix:
\begin{equation*}
\mathbf{K}_0 = \mathbf{U}\tilde{\mathbf{\Sigma}}\mathbf{V}^{\top}.
\end{equation*}

\paragraph{Optimization Dynamics of the Koopman Matrix}
The Koopman matrix $\mathbf{K}$ evolves during training according to the gradients of our composite loss function.

\begin{itemize}
\item Gradient-Based Updates. Our composite loss includes a Koopman prediction term (\textbf{Eq.} \textcolor{blue}{\ref{eq:total_loss_short}}):
\begin{equation*}
\mathcal{L}_{\mathrm{Koopman}} = \frac{1}{N(T-1)}\sum_{i=1}^N\sum_{t=1}^{T-1} \bigl\|\vz_{t+1}^{\prime\,(i)}-\mathbf{K}\,\vz_t^{\prime\,(i)}-\vc_{\mathrm{control}}^{(i)}\bigr\|_2^2.
\end{equation*}

The gradient of this loss with respect to $\mathbf{K}$ is:
\begin{align}
\label{eq:grad_k}
\nabla_{\mathbf{K}}\mathcal{L}_{\mathrm{Koopman}}
&= \frac{2\lambda}{N(T-1)}
   \sum_{i=1}^N \sum_{t=1}^{T-1}
   \bigl(\mathbf{K}\,\vz_{t}^{\prime\,(i)}
   \\
 &  +\vc_{\mathrm{control}}^{(i)}-\vz_{t+1}^{\prime\,(i)}\bigr)
   \,\vz_{t}^{\prime\,(i)\top}.
\end{align}

This gradient can be rewritten using empirical covariance matrices. Define:
\begin{equation*}
\hat{\mC}_{zz} = \frac{1}{N(T-1)}\sum_{i=1}^N\sum_{t=1}^{T-1} \vz_t^{\prime\,(i)} \vz_t^{\prime\,(i)\top},
\end{equation*}
\begin{equation*}
\hat{\mC}_{z'z} = \frac{1}{N(T-1)}\sum_{i=1}^N\sum_{t=1}^{T-1} \vz_{t+1}^{\prime\,(i)} \vz_t^{\prime\,(i)\top},
\end{equation*}
\begin{equation*}
\hat{\mC}_{cz} = \frac{1}{N(T-1)}\sum_{i=1}^N\sum_{t=1}^{T-1} \vc_{\mathrm{control}}^{(i)} \vz_t^{\prime\,(i)\top}.
\end{equation*}

Then \textbf{Eq.} \textcolor{blue}{\ref{eq:grad_k}} becomes:
\begin{equation*}\label{eq:grad_k_cov}
\nabla_{\mathbf{K}}\mathcal{L}_{\mathrm{Koopman}} = 2\lambda \left(\mathbf{K}\hat{\mC}_{zz} + \hat{\mC}_{cz} - \hat{\mC}_{z'z}\right).
\end{equation*}

For gradient descent with learning rate $\eta$, the update rule for $\mathbf{K}$ at iteration $j$ is:
\begin{equation*}\label{eq:k_update}
\mathbf{K}_{j+1} = \mathbf{K}_j - \eta \,\nabla_{\mathbf{K}}\mathcal{L}_{\mathrm{Koopman}}.
\end{equation*}

\item Fixed-Point Analysis. At convergence, assuming $\hat{\mC}_{zz}$ is invertible, the gradient vanishes when:
\begin{equation*}
\mathbf{K}^* = (\hat{\mC}_{z'z} - \hat{\mC}_{cz})\hat{\mC}_{zz}^{-1}.
\end{equation*}

This is analogous to the closed-form EDMD solution in \textbf{Eq.} \textcolor{blue}{\ref{eq:edmd_solution_correct}}, but incorporates the effect of the control term. This demonstrates the connection between our gradient-based optimization and classical EDMD, with the control term acting as a correction to the cross-covariance matrix.
\end{itemize}

\vspace{2mm}
\subsubsection{Temporal and Feature-Group Attention Mechanisms}
For non-autonomous systems or systems with complex temporal dependencies, we incorporate a control mechanism based on temporal self-attention, extended to feature groups as in \ref{subsec:temporal-attention-control} and \ref{subsec:attention-control}.

Given the sequence of refined latent states
$\mZ'=[\vz'_1,\dots,\vz'_T]\in\R^{T\times d_z}$,
we compute query, key, and value projections (single-head for alignment with the main text):
\begin{align*}
\mQ &= \mZ'\,\mW^{Q}, \quad \mQ \in \R^{T \times d_k}, \\
\mK^{\text{att}} &= \mZ'\,\mW^{K}, \quad \mK^{\text{att}} \in \R^{T \times d_k}, \\
\mV &= \mZ'\,\mW^{V}, \quad \mV \in \R^{T \times d_v},
\end{align*}
where $\mW^{Q}, \mW^{K} \in \R^{d_z \times d_k}$ and $\mW^{V} \in \R^{d_z \times d_v}$ are learnable weight matrices.

The attention scores are computed as:
\begin{equation*}
\mA = \mathrm{softmax}\!\left(\frac{\mQ\,(\mK^{\text{att}})^{\top}}{\sqrt{d_k}}\right) \in \R^{T \times T},
\end{equation*}
and the output is:
\begin{equation*}
\mathrm{Attention}(\mQ, \mK^{\text{att}}, \mV) = \mA\,\mV \in \R^{T \times d_v}.
\end{equation*}

At the final timestep $T$, we compute a group-specific context vector:
\begin{equation*}
\vc_{\mathrm{group}} = \mW_{\mathrm{grp}}\,\vz_T^{\text{ref}} + \vb_{\mathrm{grp}} \in \R^{d_{\text{model}}},
\end{equation*}
where $\mW_{\mathrm{grp}} \in \R^{d_{\text{model}} \times d_z}$, $\vb_{\mathrm{grp}} \in \R^{d_{\text{model}}}$, and $d_{\text{model}} = d_z$.

We then form a gating vector (ensuring consistent dims via projection if needed):
\begin{equation*}
\vg = \sigma\!\left(\mW_g\,[\vz_T^{\text{ref}}; \mathrm{Attention}[T,:]] + \vb_g\right) \in \R^{d_{\text{model}}},
\end{equation*}
where $\sigma$ is the sigmoid activation function.

The final control vector is obtained by fusing the temporal and group-specific contexts:
\begin{equation*}
\vc_{\mathrm{control}} = \vg \odot \vc_{\mathrm{group}} + (1-\vg) \odot \mathrm{Attention}[T,:] \in \R^{d_{\text{model}}},
\end{equation*}
where $\odot$ denotes element-wise multiplication.

Our controlled latent dynamics thus become:
\begin{equation*}\label{eq:controlled_dyn_app}
\vz_{t+1}' = \mathbf{K}\,\vz_t' + \vc_{\mathrm{control}}.
\end{equation*}

\vspace{2mm}
\subsubsection{Composite Loss Function and Gradients}\label{subsec:composite_loss}
We train our model end-to-end by minimizing a composite loss function:
\begin{equation*}
\label{eq:total_loss}
\begin{aligned}
\mathcal{L}
&= \underbrace{\tfrac{1}{N}\sum_{i=1}^N
   \bigl\|\mathcal{D}(\vz_{t+1}^{(i)};\vtheta_{\text{dec}})
   - \vy_{t+1}^{(i)}\bigr\|_2^2}_{\mathcal{L}_{\text{pred}}} \\
&\quad + \lambda\,
   \underbrace{\tfrac{1}{N(T-1)}\sum_{i=1}^N\sum_{t=1}^{T-1}
   \bigl\|\vz_{t+1}^{(i)}-\mathbf{K}\vz_t^{(i)}-\vc_{\mathrm{control}}^{(i)}\bigr\|_2^2}_{\mathcal{L}_{\text{koop}}}\!.
\end{aligned}
\end{equation*}
where $\hat{\vy}_{t+1}^{(i)} = \mathcal{D}(\vz_{t+1}^{\text{ref},(i)}; \vtheta_{\text{dec}})$ is the predicted next outcome, $\mathcal{D}$ is a decoder network, and $\lambda > 0$ balances the two loss terms.

The gradient of the Koopman loss with respect to $\mathbf{K}$ is:
\begin{equation*}
\begin{split}
\nabla_{\mathbf{K}}\mathcal{L}_{\mathrm{Koopman}}
= \frac{2\lambda}{N(T-1)} \\
\times \sum_{i=1}^N\sum_{t=1}^{T-1}
(\mathbf{K}\vz_t^{(i)}+\vc^{(i)}-\vz_{t+1}^{(i)})
\,\vz_t^{(i)\top}.
\end{split}
\end{equation*}

\vspace{2mm}
\subsubsection{Spectral Regularization and Stability}\label{subsec:spectral_reg}
As shown in \textbf{Eq.} \textcolor{blue}{\ref{eq:geom_error}}, the stability of multi-step predictions depends critically 
on the spectral properties of the Koopman matrix $\mathbf{K}$. To encourage $\|\mathbf{K}\|_2 < 1$ 
(i.e., all eigenvalues lie within the unit circle), we introduce a spectral regularization term:
\begin{equation*}
\mathcal{R}(\mathbf{K}) = \eta\, \bigl(\|\mathbf{K}\|_2^2 - \rho^2\bigr)_+^2,
\end{equation*}
where $\eta > 0$ is a regularization coefficient, $\rho < 1$ is a target upper bound for the 
spectral norm, and $(x)_+ = \max(0, x)$ activates the penalty only when $\|\mathbf{K}\|_2 > \rho$.

To approximate $\|\mathbf{K}\|_2$ efficiently during training, we use power iteration with 
$p$ iterations (typically $p \in \{5, 10\}$). Starting with a random unit vector $\mathbf{v}_0$, 
we iteratively compute:
\begin{equation*}
\mathbf{v}_{i+1} = \frac{\mathbf{K} \mathbf{v}_i}{\|\mathbf{K} \mathbf{v}_i\|_2}, 
\quad i = 0, 1, \ldots, p-1,
\end{equation*}
and approximate the spectral norm via the Rayleigh quotient:
\begin{equation*}
\|\mathbf{K}\|_2 \approx \|\mathbf{K} \mathbf{v}_p\|_2.
\end{equation*}
This approximation converges exponentially fast to the largest singular value when the 
spectral gap $\lambda_1(\mathbf{K}) - \lambda_2(\mathbf{K})$ is large.

The total loss with spectral regularization becomes:
\begin{equation*}
\mathcal{L}_{\text{total}} = \mathcal{L}_{\text{pred}} + \lambda \,\mathcal{L}_{\text{koop}} + \mathcal{R}(\mathbf{K}).
\end{equation*}
During inference, we optionally apply hard projection $\mathbf{K} \leftarrow \mathbf{K}/\max(1, \|\mathbf{K}\|_2/\rho)$ 
to ensure strict spectral bounds for long-horizon forecasting.

\subsection{Experimental Details and Implementation}

\subsubsection{Architecture Details}
\textbf{Table} \textcolor{blue}{\ref{tab:architecture}} provides the specific architecture details used in our experiments.

\begin{table}[h]
\centering
\caption{Neural Koopman architecture specifications}
\label{tab:architecture}
\small
\begin{tabular}{@{}p{2.2cm}p{3.8cm}@{}}
\hline
\textbf{Component} & \textbf{Specification} \\
\hline
Dimensions & Latent/Hidden: 360 \\
Encoders & MRI encoder: 17→180→120 (dimensions of layers); Others: var→90→20 \\
Temporal & 5 residual blocks, 8-head attention \\
Koopman & 360×360 matrix \\
Decoder & 3-layer + residuals \\
Regularization & Dropout 0.1, LayerNorm \\
\hline
\end{tabular}
\end{table}
\subsubsection{Training Details}
We train our model using the AdamW optimizer with the following hyperparameters:
\begin{itemize}
    \item Learning rate: $4 \times 10^{-4}$ with ReduceLROnPlateau scheduler,
    \item Batch size: 128,
    \item Training epochs: 200 (with early stopping),
    \item Weight decay: $10^{-3}$,
    \item $\lambda$ (Koopman loss weight): 0.1,
    \item $\eta$ (Spectral regularization weight): 0.01,
    \item $\rho$ (Spectral norm target): 0.95,
    \item Scheduler patience: 8 epochs,
    \item Early stopping patience: 20 epochs,
    \item Gradient clipping: 1.0.
\end{itemize}
We use early stopping with a patience of 20 epochs based on validation loss to prevent overfitting.

\subsubsection{Computational Complexity Analysis}
Define: $B$ = batch size, $T$ = window length, $G$ = number of feature groups, $d$ = hidden dimension, $H$ = attention heads, $d_z$ = latent dimension. Per training iteration:
\[
\begin{aligned}
\text{Feature encoders:}\;& \mathcal{O}(BTGd),\\
\text{Multi-head attention:}\;& \mathcal{O}(BT^2d),\\
\text{Koopman evolution:}\;& \mathcal{O}(BTd_z^2),\\
\text{Decoder:}\;& \mathcal{O}(Bd_z d),\\
\text{Total:}\;& \mathcal{O}(BT(Gd + Td + d_z^2)).
\end{aligned}
\]

The quadratic term dominates for $T\gg d_z$; we therefore apply a
sliding-window variant with window size $W<T$, reducing the attention
cost to $\mathcal O(NBHTW d)$.

%\begin{figure*}[t!]
%    \centering
%    \includegraphics[width=0.7\textwidth]{figures/Fig 3_OYC.pdf}
    %\caption{\textbf{Cross-cohort comparison of feature importance patterns between ADNI and OASIS-3 datasets.} These feature importance patterns were examined during exploratory data analysis to assess cross-cohort compatibility between ADNI and OASIS-3. These are raw values showing regional cortical thickness projections in some of the samples across cohorts.}
    %\label{fig:cross_cohort_brain_comparison}
%\end{figure*}

\subsubsection{Window Sizes}
\label{sec:window_sizes}
In sequential modeling, the window size selection is a crucial design factor that greatly impacts how well disease dynamics are captured. We chose a window size of three consecutive visits for our Neural Koopman Machine applied to AD progression, based on theoretical considerations from dynamical systems theory and clinical insights.

According to the ADNI protocol, the progression of AD shows significant changes over 12 to 18 months, or two to three semi-annual visits~\cite{weiner_alzheimers_2012}. This timeframe achieves the best possible balance between capturing long-term trajectories that accurately reflect the progression of the disease and short-term fluctuations that could be noise~\cite{Llado2021}.

Koopman operator theory for nonlinear dynamical systems benefits from having enough embedding dimensions to reconstruct the system attractor. For cognitive decline, which follows nonlinear trajectories with possible bifurcations at transition points, a minimum dimension of 3 captures these dynamics, while the window size remains computationally tractable and suitable for phase space reconstruction in nonlinear systems.

Our choice is further supported by data retention considerations. With a window size of three (meaning four time points to allow making one prediction based on three past visits), we had an acceptable dropout rate of about 30\%, but increasing the window size to four or five raised the data retention rate to $\sim$50\%, limiting statistical power and possibly introducing selection bias toward patients with long follow-up periods.

\subsection{Abbreviations and Terminology}

This section presents background information on Alzheimer's disease (AD)-related biomarkers and AD prediction and explains the key abbreviations and technical terms used in the manuscript.

\subsubsection{Clinical Abbreviations and Brief Explanations} 
\hfill\\
\indent \textbf{Alzheimer's Disease (AD):} AD is a progressive neurodegenerative disorder characterized by memory decline and other cognitive deficits, neuropathologically defined by $\beta$-amyloid plaques and neurofibrillary tau tangles \citelatex{jack2018nia, mckhann2011diagnosis}.

\textbf{Mild Cognitive Impairment (MCI):} One with MCI has cognitive decline greater than one would other have for the same age and education, but the decline does not significantly impair one's daily activities. MCI may present as a prodromal stage of AD \citelatex{gauthier2006mild, gauthier2006mild}.

\textbf{Cognitively Normal (CN):} CN subjects are those who show no evidence of cognitive impairment on standardized neuropsychological tests and no clinical diagnosis of MCI or dementia \citelatex{petersen2004mild, jack2018nia}.

\textbf{Cerebrospinal Fluid (CSF):} A clear, colorless fluid that surrounds the brain and spinal cord, providing cushioning and facilitating the transport of nutrients. CSF can be collected via lumbar puncture to measure biomarkers indicative of neurological pathology.

\textbf{Positron Emission Tomography (PET):} A nuclear medicine imaging technique that uses radioactive tracers to visualize metabolic activity in living tissue. In AD research, PET scans can detect amyloid plaques and tau tangles, as well as glucose metabolism in the brain.

\textbf{Magnetic Resonance Imaging (MRI):} A non-invasive imaging technique that uses magnetic fields and radio waves to create detailed images of internal body structures. In AD research, MRI measures brain volume, cortical thickness, and structural connectivity.

\subsubsection{Biomarkers}
\hfill\\
In this paper, we define biomarkers as biological features that may be associated with a disease, and, therefore, may be potentially useful for forecasting disease outcomes.

\textbf{Apolipoprotein $\varepsilon$4 (APOE4):} A genetic variant of the APOE gene that substantially increases the risk of developing late-onset AD \citelatex{bellenguezGeneticsAlzheimersDisease2020a}.
%Individuals with one APOE4 allele have 3--4 times higher risk, while those with two alleles have 8--12 times higher risk compared to non-carriers.

\textbf{Amyloid Beta (Abeta or A$\boldsymbol{\beta}$):} A peptide that accumulates abnormally in the brain in people with AD~\citelatex{hampel2021amyloid}.
The concentration of A$\beta$42 and A$\beta$42/40 ratio in cerebrospinal fluid (CSF) may be useful in AD diagnosis \citelatex{hansson2019advantages}.

\textbf{Total Tau (t-tau):} The t-tau value measures the total amount of tau protein in CSF (and in some cases blood), regardless of phosphorylation state. Elevated t-tau is interpreted as a marker of general neuronal or axonal injury or degeneration \citelatex{soares2025csf}.

%Elevated CSF t-tau levels indicate neuronal damage and death.

\textbf{Phosphorylated Tau (p-tau):} The p-tau value measures the amount of phosphorylated tau protein. Accumulation of p-tau precedes the formation of neurofibrillary tangles in AD \citelatex{moloney2023phosphorylated}. 
%CSF p-tau levels specifically reflect AD-type tau pathology and are used to distinguish AD from other neurodegenerative diseases.

\textbf{Fluorodeoxyglucose (FDG):} A radioactive glucose analog used in PET imaging to measure brain metabolism. As cerebral metabolic alterations precede the clinical manifestation of AD symptoms, FDG-PET is a useful neuroimaging tool to assess AD \citelatex{marcus2014brain}.  

%reflects resting state cerebral metabolic rates of glucose, which is an indicator of neuronal activity, and several studies have shown that cerebral metabolic alterations precede the clinical manifestation of AD symptoms.

\textbf{Pittsburgh Compound B (PIB):} A radioactive tracer used in PET imaging to visualize amyloid plaques in the living brain. Higher PIB uptake indicates greater amyloid deposits in AD \citelatex{klunk2004imaging}.

\textbf{Florbetapir (AV-45):} A radioactive PET tracer used to detect amyloid‑$\beta$ plaques in the living brain; similar in purpose to PIB, but labelled with fluorine‑18 (rather than carbon‑11), and hence having different pharmacokinetic properties (such as longer half‑life). The longer life allows the tracer to accumalate significantly me in AD brains \citelatex{wong2010vivo}. 

\textbf{Intracranial Volume (ICV):} The total volume of the intracranial cavity. ICV is essential to normalise regional brain volumes to account for differences in head size. After adjusting for age and gender, small ICV may be associated with cognitive impairment \citelatex{wolf2004intracranial}. 

\subsubsection{Cognitive Assessments}
\hfill\\
\textbf{Mini-Mental State Examination (MMSE):} A questionnaire used to measure cognitive impairment, assessing orientation, attention, memory, language, and visuospatial skills. Scores range from 0--30: a higher MMSE score suggests better cognitive function and a lower score may suggest the presence of cognitive impairment or dementia.

%\textbf{ (Montreal Cognitive Assessment):} A 30-point cognitive screening tool designed to detect mild cognitive impairment. It assesses visuospatial abilities, executive functions, attention, language, and memory. Generally, more sensitive than MMSE for detecting early cognitive changes.

\textbf{Clinical Dementia Rating Sum of Boxes (CDRSB):} A clinical assessment tool that evaluates six cognitive and functional domains: memory, orientation, judgment and problem solving, community affairs, home and hobbies, and personal care. Each domain is rated 0–3, and scores are summed (total range: 0–18), with higher scores indicating greater impairment.

\textbf{Alzheimer's Disease Assessment Scale--Cognitive Subscale 13 (ADAS13):} A comprehensive cognitive battery consisting of 13 tasks assessing: 1. Spoken language ability, 2. Comprehension of spoken language, 3. Recall of test instructions, 4. Wordfinding difficulty in spontaneous speech, 5. Following commands, 6. Naming objects and fingers, 7. Constructional praxis, 8. Ideational praxis, 9. Orientation, 10. Word-recall task, 11. Word recognition task, 12. Delayed Word Recall, and 13. Number Cancellation Word recognition task. Scores are summed and range from 0--85, with higher scores indicating greater cognitive impairment. %Widely used in clinical trials as a primary outcome measure.

\begin{comment}
\subsubsection{Neuroanatomical Regions and Networks}

\textbf{Hippocampus:} A seahorse-shaped brain structure crucial for memory formation and retrieval. It is one of the first brain regions affected in AD, leading to early memory problems.

\textbf{Ventricles:} Fluid-filled cavities within the brain that contain cerebrospinal fluid. Ventricular enlargement often occurs in AD due to surrounding brain tissue loss (atrophy).

\textbf{Entorhinal Cortex:} A brain region that serves as the main interface between the hippocampus and neocortex. It is among the earliest areas affected by tau pathology in AD.

\textbf{Default Mode Network:} A network of brain regions that are active during rest and introspective tasks. This network shows early dysfunction in AD and includes the posterior cingulate cortex, precuneus, and angular gyrus.

\textbf{Schaefer 17-Network Parcellation:} A brain atlas that divides the cerebral cortex into 17 distinct functional networks based on resting-state connectivity patterns. These networks include visual, somatomotor, attention, salience, limbic, control, default mode, and temporal-parietal systems.
\end{comment}

\subsubsection{Technical Abbreviations}
\hfill\\
\indent \textbf{Koopman Operator Theory (KOT):} A mathematical framework, originally introduced by Bernard Koopman, that represents nonlinear dynamical systems via an infinite-dimensional linear operator acting on observables, enabling linear analysis of the system dynamics in a lifted functional space.

\textbf{Extended Dynamic Mode Decomposition (EDMD):} A data-driven method for approximating the Koopman operator by lifting the system into a finite-dimensional subspace spanned by selected basis functions. EDMD generalizes standard Dynamic Mode Decomposition (DMD), enabling the analysis of nonlinear dynamical systems via a linear representation in the lifted space.

\textbf{Dynamic Mode Decomposition (DMD):} A method for characterizing the dynamics of complex systems by decomposing spatiotemporal data into spatial modes and corresponding temporal coefficients that capture coherent structures evolving over time.

%\textbf{Long Short-Term Memory (LSTM):} A type of recurrent neural network architecture designed to learn long-term dependencies in sequential data by using gating mechanisms to control information flow.

%\textbf{Gated Recurrent Unit (GRU):} A simplified version of LSTM that uses fewer parameters while maintaining similar performance for sequence modeling tasks.

\textbf{Reverse Time Attention Model (RETAIN):} A neural network architecture that employs reverse-time attention mechanisms to make predictions on sequential healthcare data while providing interpretable feature-level explanations.

%\subsubsection{Research Infrastructure and Datasets}

%\textbf{ADNI (Alzheimer's Disease Neuroimaging Initiative):} A large-scale, multi-site longitudinal study launched in 2004 to develop clinical, imaging, genetic, and biochemical biomarkers for early detection and tracking of Alzheimer's disease. ADNI has enrolled over 1{,}600 participants across multiple phases.

%\textbf{OASIS-3 (Open Access Series of Imaging Studies):} A longitudinal neuroimaging dataset containing MRI sessions from cognitively normal adults and individuals at various stages of cognitive decline. It provides open access to researchers worldwide.

%\newpage
\bibliographystylelatex{IEEEtran}
\bibliographylatex{all_refs.bib}

\vfill

\end{document}